\documentclass[10pt,twocolumn,letterpaper]{article}

\usepackage[pagenumbers]{cvpr} 

\usepackage{amssymb}
\usepackage{pifont}
\usepackage{nicematrix}
\usepackage{algorithm}
\usepackage{algorithmic}
\usepackage{amsmath}
\usepackage{bm}
\usepackage{graphicx}
\usepackage{array}
\usepackage{booktabs}
\usepackage[table]{xcolor}
\usepackage{multirow}
\usepackage{arydshln} 
\usepackage{makecell}
\usepackage{dblfloatfix}
\usepackage{subcaption}
\usepackage{cuted}
\usepackage{caption}
\usepackage[normalem]{ulem}
\definecolor{link}{RGB}{214,51,132}

\newtheorem{proposition}{Proposition}

\newtheorem{proof}{Proof}
\newtheorem{remark}{Remark}
\newtheorem{lemma}{Lemma}

\definecolor{cvprblue}{rgb}{0.21,0.49,0.74}
\usepackage[pagebackref,breaklinks,colorlinks,allcolors=cvprblue]
{hyperref}

\def\paperID{*****} 
\def\confName{CVPR}
\def\confYear{2026}

\title{%
\vspace{-15pt}
  \makebox[\linewidth][l]{%
    \raisebox{-0.5\height}{\includegraphics[height=1.2em]{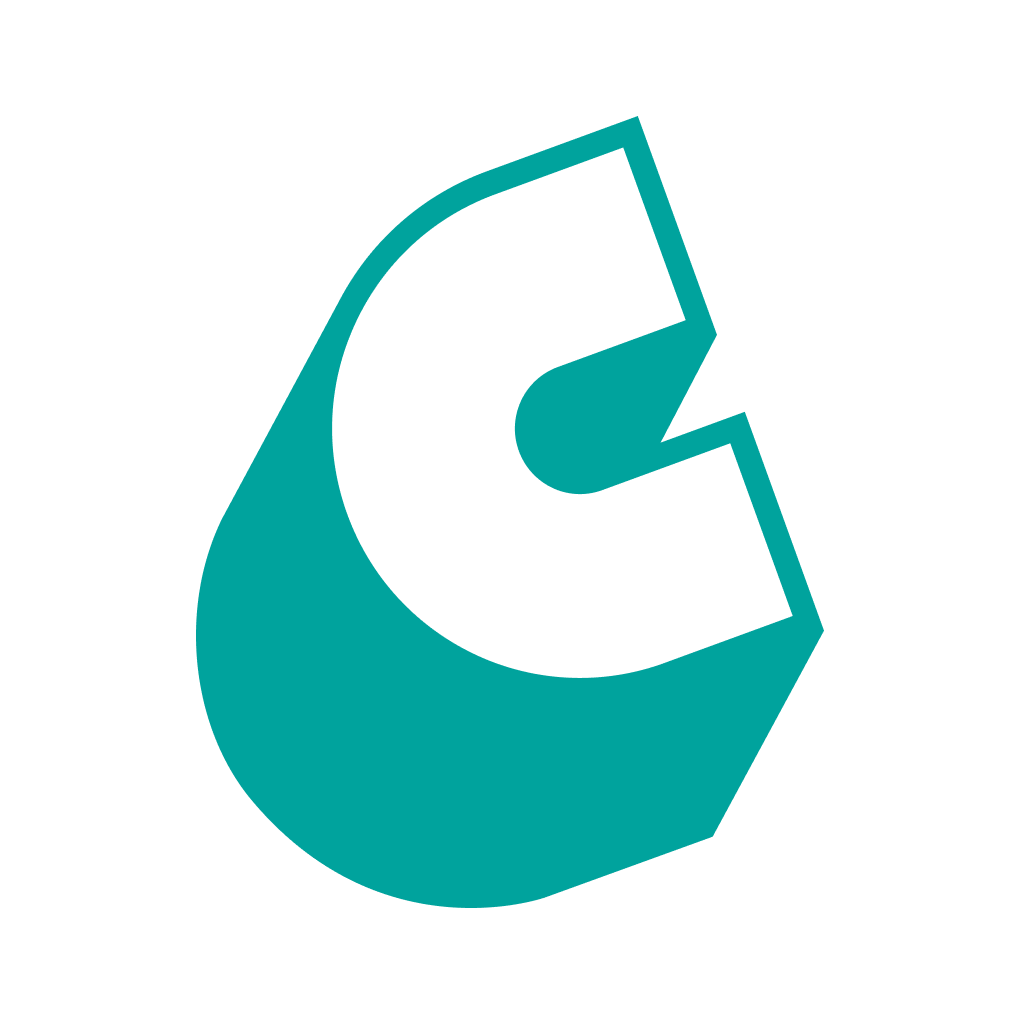}}%
    \hspace{-0.35em}%
    \raisebox{-0.5\height}{\includegraphics[height=1.5em]{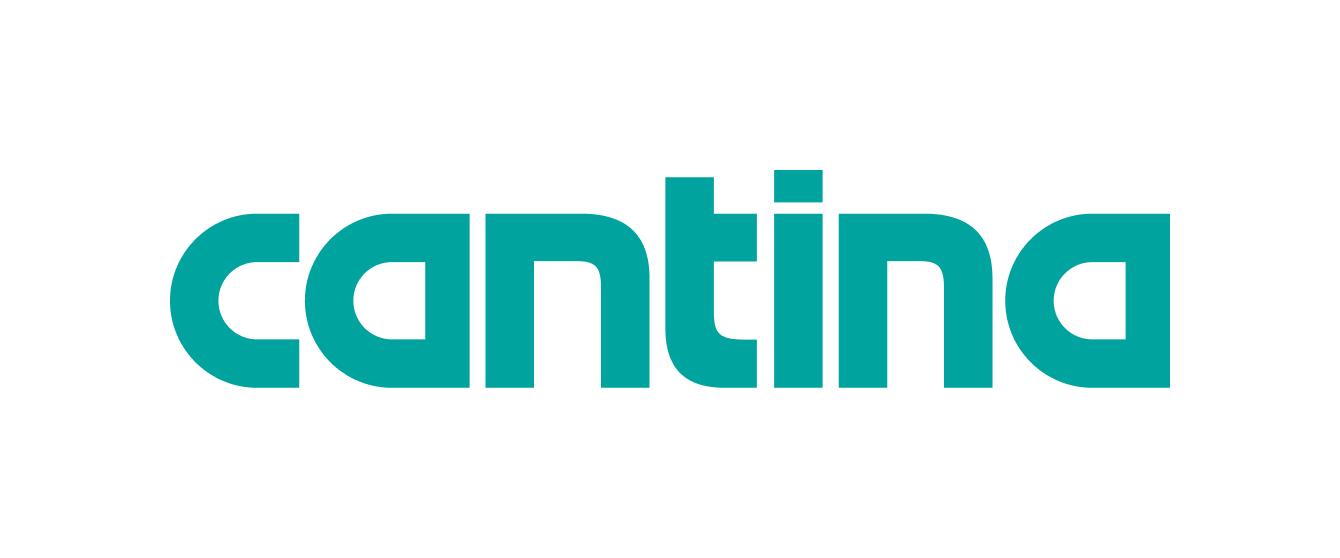}}%
  }%
  \par\vspace{-6pt}%
  \noindent{\color{black!40}\rule{\linewidth}{0.8pt}}%
  \par\vspace{2pt}%
  \raisebox{-0.28\height}{\includegraphics[height=1.7em]{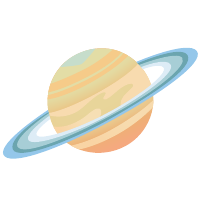}}\,OrbitQuant: Data-Agnostic Quantization for Image and \\ Video Diffusion Transformers\vspace{-10pt}}

\author{Donghyun Lee$^{1,2,\dagger}$\thanks{Work done during an internship at Cantina Labs.} , Jitesh Chavan$^1$, Duy Nguyen$^{1,3}$, Sam Huang$^1$, Liming Jiang$^1$, \\ Priyadarshini Panda$^2$, Timo Mertens$^1$, Saurabh Shukla$^{1,\dagger}$\\[3pt]
$^1$Cantina Labs, $^2$University of Southern California, $^3$University of Illinois Urbana-Champaign
\\[2pt]{\footnotesize $^{\dagger}$Correspondence to: saurabh@cantina.ai, donghyun.lee.1@usc.edu}
\\[2pt]{\normalsize Project Page: {\tt \href{https://saurabhcantina.github.io/orbitquant/}{\textcolor{link}{https://saurabhcantina.github.io/orbitquant/}}}}}

\makeatletter
\patchcmd{\@maketitle}{\vskip .375in}{\vskip .12in}{}{}
\makeatother

\begin{document}
\maketitle

\begin{strip}
\centering
\vspace{-20mm}
\includegraphics[width=\linewidth]{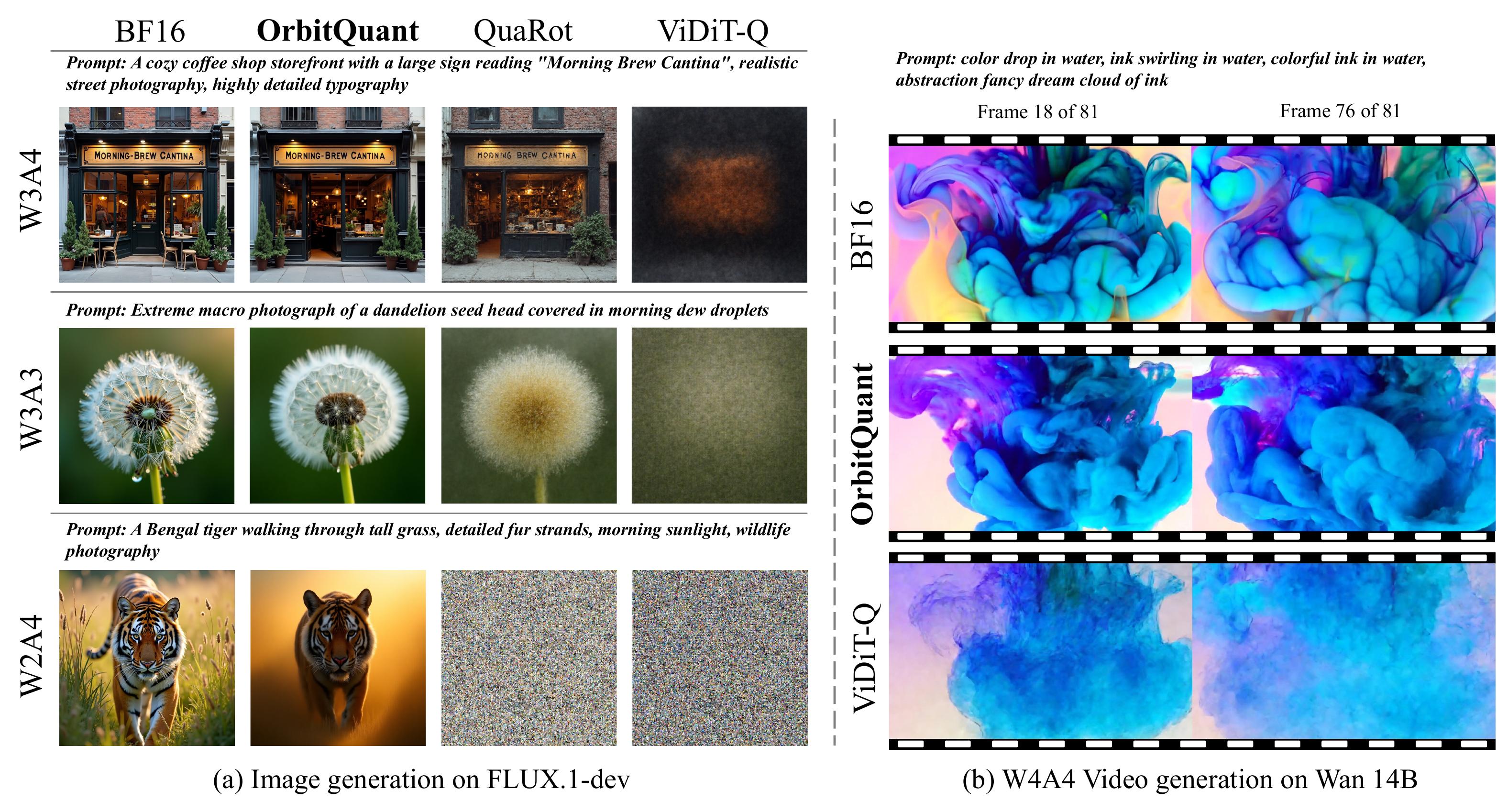}
\vspace{-8mm}
\captionof{figure}{Qualitative comparison of OrbitQuant against QuaRot~\cite{ashkboos2024quarot} and ViDiT-Q~\cite{zhao2024vidit} under low-bit quantization, with the BF16 full-precision output. (a) Image generation on FLUX.1-dev at W3A4, W3A3, and W2A4. (b) Video generation on Wan 14B at W4A4.}
\vspace{-3mm}
\label{fig:teaser}
\end{strip}


\begin{abstract}
Diffusion transformers (DiTs) achieve state-of-the-art image and video generation, but their multi-step sampling and growing parameter count make inference expensive. Post-training quantization (PTQ) is the natural remedy, yet DiT activations shift across timesteps, prompts, and guidance branches, forcing prior methods to re-fit calibration data for every new checkpoint or modality. We present OrbitQuant, a data-agnostic weight-activation quantizer that bypasses range estimation by quantizing in a normalized, rotated basis. In this basis, a randomized permuted block-Hadamard (RPBH) rotation concentrates each coordinate around one fixed, known marginal regardless of the input, so a single Lloyd–Max codebook serves all timesteps, prompts, and layers of a given input dimension. We extend the same quantizer to weight rows offline, absorbing the rotation into the weights so that it cancels inside each linear layer and only a forward rotation on the activations remains at runtime. The same recipe transfers from image to video with no per-modality tuning. Across FLUX.1, Z-Image-Turbo, Wan 2.1, and CogVideoX, it sets the state of the art for PTQ at several low-bit settings. It also pushes PTQ of image diffusion transformers to W2A4 with usable generation quality.
\end{abstract}
\vspace{-6mm}

\vspace{-2mm}
\section{Introduction}
\label{sec:intro}
\vspace{-2mm}
Diffusion models have become the dominant paradigm for high-fidelity image and video generation. Traditionally, these models employ a convolutional U-Net as the denoising backbone~\cite{rombach2022high, saharia2022photorealistic, singer2022make, ho2020denoising}. More recently, however, the field has shifted to transformer-based denoisers. Diffusion Transformers (DiTs)~\cite{peebles2023scalable, bao2023all} replace the U-Net with a stack of attention blocks that scales favorably with model size and data, and now underpin state-of-the-art image~\cite{chen2024pixart, esser2024scaling, labs2025flux1kontextflowmatching, xie2024sana, wu2025qwen, cai2025z} and video generators~\cite{sora2024, yang2025cogvideox, kong2024hunyuanvideo, wan2025wan}.

Despite their quality, DiTs are expensive to run at inference for two reasons. First, the transformer trunk is evaluated repeatedly across many sequential denoising timesteps. Second, unlike LLM decoding, where latency is dominated by weight loading~\cite{lin2024awq, frantar2022gptq}, DiT inference is compute-bound even at a single batch, so weight-only quantization yields no measured speedup~\cite{li2024svdquant}. Low-bit post-training quantization (PTQ) of both weights and activations is therefore the natural remedy, since it compresses both the memory footprint and the compute of every step without any retraining.

PTQ is most mature for large language models (LLMs), where activation outliers are handled by scaling them into the weights~\cite{xiao2023smoothquant} or rotating them away~\cite{ashkboos2024quarot, liu2025spinquant, chee2023quip, tseng2024quip}. Both assume activation statistics that a single calibration pass can capture, which holds for LLMs but breaks for diffusion transformers. DiT activations exhibit channel-wise outliers~\cite{wu2024ptq4dit} and shift across timesteps, prompts, and classifier-free-guidance branches~\cite{chen2025q, zhao2024vidit}. Existing DiT PTQ methods absorb this drift with calibration~\cite{zhao2024vidit, li2024svdquant, zhang2026adatsq}, so each new checkpoint, resolution, or modality requires a calibration set to be re-collected and re-fit.


We propose \textbf{OrbitQuant}, a rotation-based PTQ framework for diffusion transformers. A DiT activation offers no stable range to calibrate against, since it moves with every input. Rather than chase that moving target with per-input scales, OrbitQuant rotates it away. A random rotation turns a normalized activation into coordinates that follow one fixed, known distribution regardless of the input~\cite{zandieh2025turboquant}, so a single Lloyd--Max codebook built offline quantizes every activation and is shared across all denoising steps. OrbitQuant realizes this as a randomized permuted block-Hadamard (RPBH) rotation, and we find that a uniform random permutation suffices to keep the rotated marginal well-behaved at low bit-width on DiT activations. The same rotation is folded into the weight rows offline, so it cancels inside each linear layer, with weights and activations quantized in one shared basis, leaving only a single forward RPBH rotation at inference. The main contributions of our work are as follows:

\begin{itemize}
    \item We cast low-bit DiT activation quantization as a distributional codebook problem, replacing per-timestep range calibration with a single Lloyd--Max codebook fit to a fixed post-rotation marginal and shared across all denoising steps.
    \item We extend the same quantizer to the weight rows with a shared-rotation design that quantizes weights and activations in one common basis.
    \item We propose the RPBH rotation, an efficient rotation whose uniform random permutation keeps activations well-quantizable at low bit-width without calibration.
    \item We evaluate OrbitQuant on image and video DiTs, achieving state-of-the-art PTQ on GenEval and VBench without calibration data. At W2A4, where prior PTQ baselines collapse to noise, OrbitQuant is the only method that still produces usable images, shown in Figure~\ref{fig:teaser}.
\end{itemize}

\begin{figure*}[t]
\centering
\includegraphics[width=\linewidth]{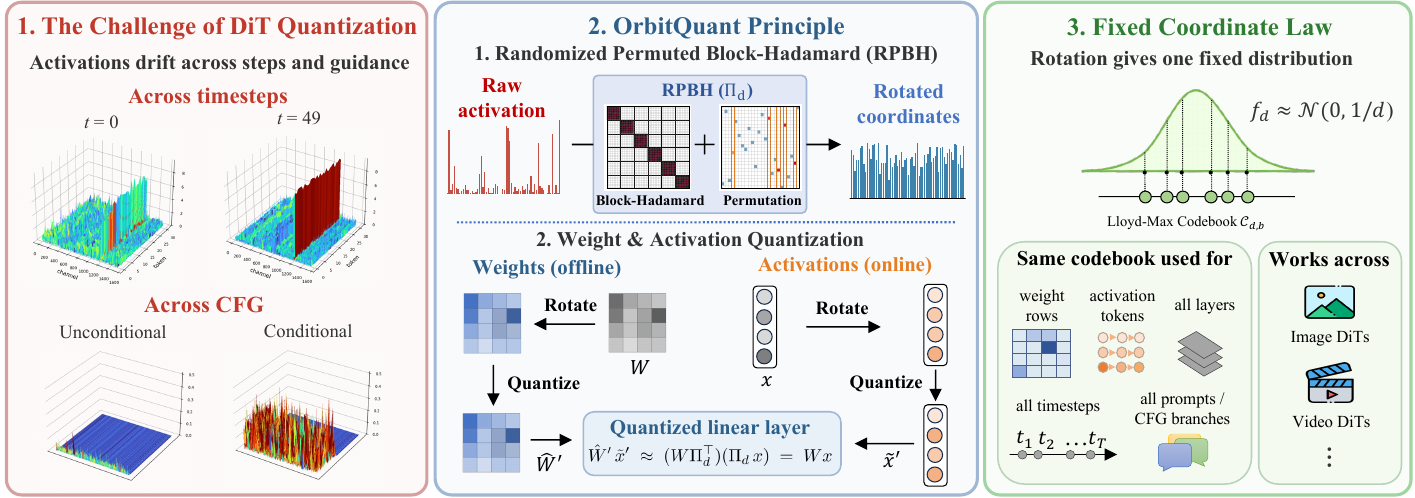}
\vspace{-6mm}
\caption{\textbf{Overview of OrbitQuant.} \textbf{(1)} DiT activations drift across timesteps and CFG branches, so calibrated scales do not transfer. \textbf{(2)} The RPBH rotation $\Pi_d$ maps raw activations to well-behaved coordinates. Folded into the weights, it cancels inside each layer ($\hat{W}'\hat{x}' \approx Wx$). \textbf{(3)} Rotated coordinates concentrate around one fixed marginal $f_d \approx \mathcal{N}(0,1/d)$, so a single Lloyd--Max codebook $\mathcal{C}_{d,b}$ per dimension serves all layers, timesteps, prompts, and both image and video DiTs, with no calibration.}
\vspace{-3mm}
\label{fig:overview}
\end{figure*}

\vspace{-2mm}
\section{Related Work}
\label{sec:related}
\vspace{-2mm}

\paragraph{LLM Quantization.}
In LLMs, Weight-only methods quantize weights alone and suit the memory-bound decoding regime~\cite{frantar2022gptq, lin2024awq, kim2023squeezellm, dettmers2023spqr}, while quantizing activations requires handling outlier channels, either by scaling them into the weights~\cite{xiao2023smoothquant} or by rotating them away. Rotation-based methods~\cite{ashkboos2024quarot, liu2025spinquant, chee2023quip, tseng2024quip, sun2024flatquant, hu2025ostquant} fold a Hadamard or learned rotation into the weights by computational invariance, leaving the output unchanged while making activations easy to quantize. 

Recent block rotations pair the rotation with a calibrated permutation. DuQuant~\cite{lin2024duquant} orders channels by outlier magnitude, and PeRQ~\cite{sanjeet2026pushing} fits a permutation that balances per-block mass, which its analysis shows governs block-Hadamard outlier suppression. Random rotations also enable calibration-free vector quantization. PolarQuant~\cite{han2026polarquant} quantizes KV embeddings in polar coordinates, building a codebook from the analytically known angle distribution after random preconditioning. TurboQuant~\cite{zandieh2025turboquant} brings this distributional codebook to Cartesian coordinates with a dense Haar rotation and a Beta-marginal Lloyd--Max codebook. Both are standalone KV-cache vector quantizers that rotate back to reconstruct. OrbitQuant instead applies the rotation-plus-codebook idea inside DiT projections, where the shared rotation cancels rather than being inverted, and replaces the dense Haar with an efficient RPBH rotation. Unlike prior permuted rotations, it draws the permutation uniformly at random, with a probabilistic guarantee that the rotated coordinates stay well-behaved.

\vspace{-2mm}
\paragraph{Diffusion Quantization.}
Most existing DiT quantization methods are calibration-based. SVDQuant~\cite{li2024svdquant} absorbs activation outliers with a high-precision low-rank branch fit on a calibration set, PTQ4DiT~\cite{wu2024ptq4dit} balances salient channels with block reconstruction, AdaTSQ~\cite{zhang2026adatsq} fits per-channel scales with timestep-sensitive precision allocation, and ViDiT-Q~\cite{zhao2024vidit} pairs per-channel calibration with mixed precision on both image and video DiTs. LRQ-DiT~\cite{yang2025lrq} adds calibrated DuQuant-style~\cite{lin2024duquant} rotations on outlier-heavy layers, PermuQuant~\cite{cheng2026permuquant} calibrates channel reordering for per-group quantization, and S$^2$Q-VDiT~\cite{feng2025s} selects calibration data by Hessian-aware saliency with token-level distillation on video DiTs. QVGen~\cite{huang2025qvgen} and RobuQ~\cite{yang2025robuq} depart from PTQ with quantization-aware training (QAT), the latter reaching ternary weights on ImageNet DiTs. Closer to our setting, DVD-Quant~\cite{li2025dvd} is data-free, pairing a rotated quantizer with grid refinement and adaptive bit allocation, but it is tailored to video DiTs with per-model machinery, and ConvRot~\cite{huang2025convrot} pairs calibration-free group-wise regular Hadamard rotations with a uniform grid on FLUX. In contrast, OrbitQuant uses a fully analytic distribution-derived codebook that requires no model evaluation at quantizer construction, and transfers unchanged between image and video DiTs.

\vspace{-2mm}
\section{Preliminaries}
\label{sec:prelim}
\vspace{-1mm}
This section fixes notation and reviews the two ingredients we inherit from TurboQuant~\cite{zandieh2025turboquant}, namely a Haar-random orthogonal rotation and a Lloyd--Max scalar codebook designed against the post-rotation coordinate distribution.

\subsection{Notation}
\label{sec:prelim:notation}
\vspace{-0.9mm}
We write matrices in bold uppercase (e.g., $\mathbf{W}$), vectors in bold lowercase (e.g., $\mathbf{x}$), and scalars in plain type. A DiT block is built from linear projections
\begin{equation}
\mathbf{y} = \mathbf{W}\mathbf{x}, \quad \mathbf{W} \in \mathbb{R}^{m \times d},\ \mathbf{x} \in \mathbb{R}^d,
\label{eq:linear}
\end{equation}
applied token-wise to image- or text-token streams. We write $\mathbf{w}_i^\top \in \mathbb{R}^d$ for the $i$-th row of $\mathbf{W}$ and $r_i = \|\mathbf{w}_i\|_2$ for its $\ell_2$ norm. Given weight and activation bit-widths $b_w$ and $b_a$, our goal is to replace $\mathbf{W}$ and $\mathbf{x}$ with quantized surrogates $\hat{\mathbf{W}}$ and $\hat{\mathbf{x}}$ at $b_w$ and $b_a$ bits per coordinate, so that $\hat{\mathbf{W}}\hat{\mathbf{x}} \approx \mathbf{W}\mathbf{x}$ at every denoising step and for every prompt, without calibration data. We write $\mathcal{L}$ for the set of target linear layers and $\mathcal{D}$ for the distinct input dimensions in $\mathcal{L}$.

\subsection{TurboQuant}
\label{sec:prelim:turboquant}
\vspace{-1mm}
TurboQuant~\cite{zandieh2025turboquant} is a calibration-free vector quantizer, originally for KV-cache compression, that quantizes a vector in two steps. First, it normalizes $\mathbf{x}$ to $\tilde{\mathbf{x}} = \mathbf{x} / \|\mathbf{x}\|_2$, keeps the norm, and applies a Haar-random orthogonal rotation $\boldsymbol{\Phi}_d \in \mathbb{R}^{d \times d}$~\cite{mezzadri2006generate}. Regardless of $\mathbf{x}$, each coordinate of $\boldsymbol{\Phi}_d \tilde{\mathbf{x}}$ then follows the fixed marginal
\begin{equation}
f_d(t) = \frac{\Gamma(d/2)}{\sqrt{\pi}\,\Gamma((d-1)/2)} (1 - t^2)^{(d-3)/2}, \quad t \in [-1, 1],
\label{eq:beta-marginal}
\end{equation}
where $\Gamma(\cdot)$ is the Gamma function. For $d \geq 64$, this marginal is tightly approximated by $\mathcal{N}(0, 1/d)$, and distinct coordinates are nearly independent. Second, since $f_d$ is known offline, we precompute an MSE-optimal Lloyd--Max codebook~\cite{lloyd1982least, max1960quantizing} for each $(d, b) \in \mathcal{D} \times \{b_w, b_a\}$, giving $2^b$ centroids $\mathcal{C}^{(d,b)} = \{c_1^{(d,b)}, \ldots, c_{2^b}^{(d,b)}\}$ and the nearest-centroid map
\begin{equation}
\hat{q}_b^{(d)}(t) = \operatorname*{arg\,min}_{c \,\in\, \mathcal{C}^{(d,b)}} |t - c|,
\label{eq:lloyd-max}
\end{equation}
applied coordinate-wise via $\hat{Q}_b^{(d)}(\mathbf{u})_k = \hat{q}_b^{(d)}(u_k)$ for any $\mathbf{u} \in \mathbb{R}^d$. The codebook uses no scales or zero-points and is shared by all layers and rows of the same input dimension $d$. Dequantization looks up centroids, rotates back by $\boldsymbol{\Phi}_d^\top$, and rescales by the stored norm.

\vspace{-1mm}
\section{Methodology}
\label{sec:method}
\vspace{-2mm}
\subsection{Overview}
\label{sec:method:overview}
\vspace{-1mm}
OrbitQuant replaces per-input range calibration with a distributional quantizer applied in one shared, rotated, normalized basis. Because weights and activations are quantized in the same basis, the rotation cancels in the matrix product and only a forward rotation on the activation remains at runtime. We quantize weights offline (Section~\ref{sec:method:weight}) and activations online (Section~\ref{sec:method:act}), realizing $\Pi_d$ as a randomized permuted block-Hadamard (RPBH) transform with a uniform random permutation (Section~\ref{sec:method:rpbh}). Figure~\ref{fig:overview} gives an overview.

\vspace{-1mm}
\subsection{Offline Weight Quantization}
\label{sec:method:weight}
For a linear layer with input dimension $d$, OrbitQuant uses the shared rotation $\boldsymbol{\Pi}_d$ of that dimension. Before inference we rotate the weight matrix into this basis,

\begin{equation}
\mathbf{W}' = \mathbf{W}\boldsymbol{\Pi}_d^\top.
\label{eq:weight-rotate}
\end{equation}
We split each row of $\mathbf{W}'$ into a magnitude $r'_i$ and a unit direction $\tilde{\mathbf{w}}'_i$,
\begin{equation}
r'_i = \|\mathbf{w}'_i\|_2, \quad \tilde{\mathbf{w}}'_i = \mathbf{w}'_i / r'_i, \quad i = 1, \ldots, m.
\label{eq:weight-norm}
\end{equation}
We then quantize the direction with the Lloyd--Max codebook of Section~\ref{sec:prelim:turboquant} and re-attach the magnitude,
\begin{equation}
\hat{\mathbf{W}}' = \mathrm{diag}(\mathbf{r}') \cdot \hat{Q}_{b_w}^{(d)}(\tilde{\mathbf{W}}').
\label{eq:weight-quant}
\end{equation}
Because $\boldsymbol{\Pi}_d$ is sampled independently of $\mathbf{w}_i$, each unit direction $\tilde{\mathbf{w}}'_i$ has coordinates following the density $f_d$ of Equation~\eqref{eq:beta-marginal}, so the Lloyd--Max codebook designed for $f_d$ is MSE-optimal on it. The row-norm vector $\mathbf{r}' \in \mathbb{R}^m$ is stored in BF16, adding $16m$ bits per layer, negligible against the $b_w m d$ bits of the quantized direction ($<0.3\%$). The original weight is replaced by $\hat{\mathbf{W}}'$ in place, so the inference path operates entirely in the rotated basis.

\begin{algorithm}[t]
\caption{OrbitQuant offline weight patching and online activation quantization}
\label{alg:OrbitQuant}
\small
\begin{algorithmic}[1]
\REQUIRE Transformer $\mathcal{T}$, target layers $\mathcal{L}$, input dimensions $\mathcal{D}$, bit-widths $(b_w, b_a)$, clamp $\varepsilon$
\STATE \textbf{$\triangleright$ Offline}
\FOR{$d \in \mathcal{D}$}
  \STATE $\boldsymbol{\Pi}_d \gets \mathrm{RPBH}(d)$
  \STATE $\hat{Q}_{b_w}^{(d)}, \hat{Q}_{b_a}^{(d)} \gets \textsc{LloydMax}(d, b_w), \textsc{LloydMax}(d, b_a)$
\ENDFOR
\FOR{each $\mathbf{W} \in \mathcal{L}$ with input dim $d$}
  \STATE $\mathbf{W}' \gets \mathbf{W}\boldsymbol{\Pi}_d^\top$
  \STATE $r'_i \gets \|\mathbf{w}'_i\|_2, \quad \tilde{\mathbf{w}}'_i \gets \mathbf{w}'_i / r'_i \quad \text{for } i = 1, \dots, m$
  \STATE $\hat{\mathbf{W}}' \gets \mathrm{diag}(\mathbf{r}')\,\hat{Q}_{b_w}^{(d)}(\tilde{\mathbf{W}}')$
  \STATE Replace $\mathbf{W}$ by $\hat{\mathbf{W}}'$ in $\mathcal{T}$
\ENDFOR
\medskip
\STATE \textbf{$\triangleright$ Online} on tokens $\mathbf{x} \in \mathbb{R}^{N \times d}$
\STATE $\mathbf{x}' \gets \mathbf{x}\boldsymbol{\Pi}_d^\top$ 
\STATE $s \gets \|\mathbf{x}'\|_2, \quad \tilde{\mathbf{x}}' \gets \mathbf{x}' / (s + \varepsilon)$
\STATE $\hat{\mathbf{x}}' \gets s \cdot \hat{Q}_{b_a}^{(d)}(\tilde{\mathbf{x}}')$
\RETURN $\hat{\mathbf{x}}'$
\end{algorithmic}
\end{algorithm}

\subsection{Online Activation Quantization}
\label{sec:method:act}
At inference, each incoming activation $\mathbf{x}$ is rotated by $\boldsymbol{\Pi}_d$ before it enters the layer and split into a magnitude $s$ and a unit direction $\tilde{\mathbf{x}}'$,
\begin{equation}
\mathbf{x}' = \boldsymbol{\Pi}_d \mathbf{x}, \quad s = \|\mathbf{x}'\|_2, \quad \tilde{\mathbf{x}}' = \mathbf{x}' / (s + \varepsilon),
\label{eq:act-norm}
\end{equation}
where $\varepsilon = 10^{-10}$ guards against zero norms on padding tokens. For a batch of $N$ tokens, this forward rotation $\boldsymbol{\Pi}_d\mathbf{x}$ is applied row-wise as $\mathbf{x}\boldsymbol{\Pi}_d^\top$. We quantize the direction with the Lloyd--Max quantizer $\hat{Q}_{b_a}^{(d)}$ and rescale by $s$,
\begin{equation}
\hat{\mathbf{x}}' = s \cdot \hat{Q}_{b_a}^{(d)}(\tilde{\mathbf{x}}').
\label{eq:act-quant}
\end{equation}
 As with the weights, $\tilde{\mathbf{x}}'$ has coordinates following $f_d$, so this codebook family applies without re-fitting. The only input-dependent quantity at inference is the per-token scalar $s$, while the codebook is fixed and calibration-free. Algorithm~\ref{alg:OrbitQuant} collects the offline and online stages. The weight absorbs $\boldsymbol{\Pi}_d^\top$ and the activation applies $\boldsymbol{\Pi}_d$, so the two cancel in the product, $\mathbf{W}'\mathbf{x}' = \mathbf{W}\boldsymbol{\Pi}_d^\top\boldsymbol{\Pi}_d\mathbf{x} = \mathbf{W}\mathbf{x}$. The quantized layer therefore computes $\hat{\mathbf{W}}'\hat{\mathbf{x}}' \approx \mathbf{W}\mathbf{x}$ with no inverse rotation at runtime.

\begin{figure*}[t]
\centering
\includegraphics[width=\linewidth]{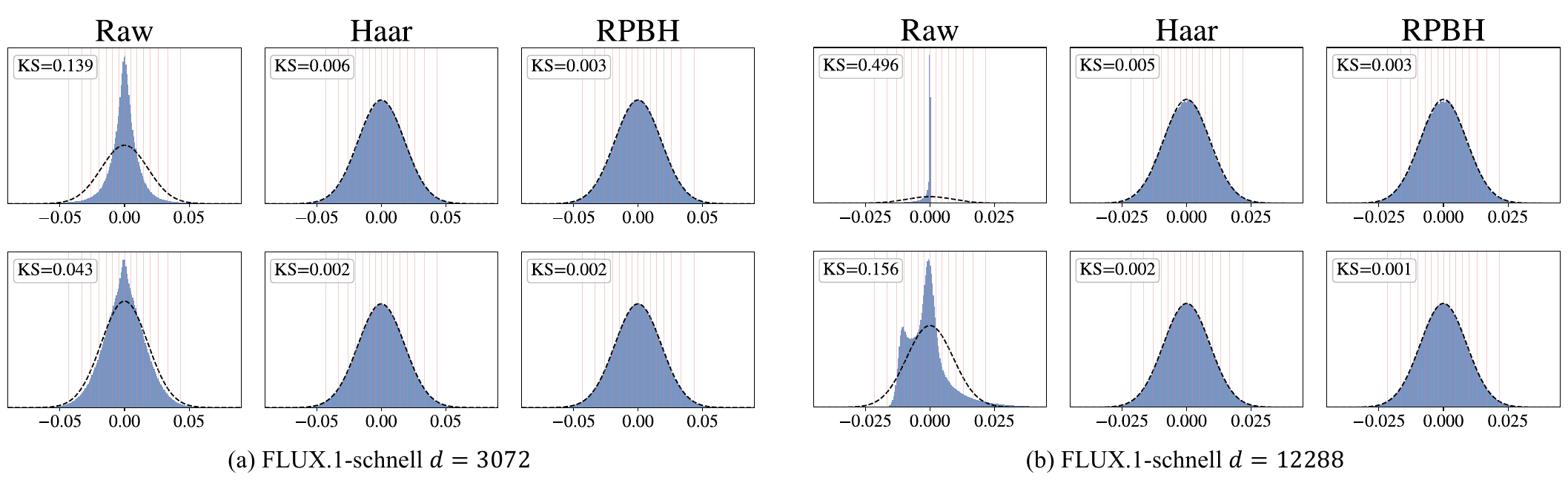}
\vspace{-8mm}
\caption{Rotated activation coordinates follow the dimension marginal $f_d$. For (a) an attention projection ($d{=}3072$) and (b) a feed-forward projection ($d{=}12288$) of FLUX.1-schnell, each cell plots the distribution of activation tokens with no rotation (Raw), a dense Haar rotation, and the RPBH. The dashed curve is the target $\mathcal{N}(0,1/d)$ and the inset reports the Kolmogorov--Smirnov distance to it. The light red vertical ticks mark the bin edges of the shared Lloyd--Max $W4$ codebook, which is fit to $f_d$ and reused for both weights and activations.}
\vspace{-4mm}
\label{fig:distribution}
\end{figure*}

\subsection{Randomized permuted block-Hadamard}
\label{sec:method:rpbh}
Quantizing all layers of dimension $d$ with one codebook built from $f_d$ works only if the rotated coordinates follow that marginal. A Haar rotation $\boldsymbol{\Phi}_d$ from~\cite{zandieh2025turboquant} makes them follow $f_d$ exactly. Since the rotation cancels in the matrix product for any orthogonal $\boldsymbol{\Pi}_d$, we are free to choose it for efficiency, as long as it keeps the marginal close to $f_d$. A dense Haar rotation costs $O(d^2)$ in both time per token and storage, which dominates the per-image activation cost. We instead realize $\boldsymbol{\Pi}_d$ as a randomized permuted block-Hadamard (RPBH) rotation~\cite{ailon2009fast, tropp2011improved},
\begin{equation}
\boldsymbol{\Pi}_d = \mathrm{blkdiag}(\mathbf{H}_h \mathbf{D}_1, \ldots, \mathbf{H}_h \mathbf{D}_{d/h}) \cdot \mathbf{P}_\pi,
\label{eq:rpbh}
\end{equation}
where $\mathrm{blkdiag}(\cdot)$ places its arguments as diagonal blocks, $\mathbf{H}_h$ is a $h \times h$ Walsh--Hadamard matrix, each $\mathbf{D}_i$ is a Rademacher sign diagonal, and $\mathbf{P}_\pi$ is the matrix of a uniform random permutation $\pi$ drawn once per dimension. It admits an $O(d \log h)$ transform through a permutation gather and a per-block Fast Walsh--Hadamard Transform, and stores as a sign vector and a permutation array rather than a $d \times d$ matrix. Unlike a full Randomized Hadamard Transform (RHT)~\cite{ashkboos2024quarot, tseng2024quip}, whose Walsh--Hadamard matrix exists only on power-of-two dimensions, the block form is constructible on any $d$. In practice, $h$ is the largest power of two dividing $d$, giving $h \in \{128, 512, 1024, 2048, 4096\}$ across all evaluated models.

The leading permutation $\mathbf{P}_\pi$ acts first and keeps the marginal close to $f_d$ at low bit-width. Without it, each block-Hadamard mixes only within its block, and an outlier concentrated in one block never spreads across the others. $\mathbf{P}_\pi$ spreads coordinates across blocks, so every block receives a balanced share of the input mass with high probability over $\pi$. Crucially, this permutation need not be data-dependent. Prior quantizers calibrate it by outlier magnitude~\cite{lin2024duquant}, column importance~\cite{gu2026lopro}, or per-block mass~\cite{sanjeet2026pushing}. RPBH instead draws it uniformly at random, which suffices for any input as the following proposition shows.

\begin{proposition}[Universal variance concentration]
\label{prop:marginal}
Let $\boldsymbol{\Pi}_d$ be the RPBH rotation of Equation~\eqref{eq:rpbh} on $d = kh$ with $k$ blocks of size $h$, and $\tilde{\mathbf{x}}$ a fixed unit vector with $\mu_\infty = \|\tilde{\mathbf{x}}\|_\infty^2$. For every $\delta \in (0, 1)$, with probability at least $1 - \delta$ over $\boldsymbol{\Pi}_d$, every coordinate $z_i$ of $\boldsymbol{\Pi}_d \tilde{\mathbf{x}}$ is centered with
\begin{equation}
\mathrm{Var}(z_i \mid \pi) \in \Big[\tfrac{1-\rho}{d},\; \tfrac{1+\rho}{d}\Big], \qquad \rho = d\,\mu_\infty \sqrt{\tfrac{1}{2h}\log\tfrac{4k}{\delta}}.
\label{eq:prop-var}
\end{equation}
\end{proposition}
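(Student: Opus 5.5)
Conditioning on the permutation $\pi$ leaves the Rademacher signs $\mathbf{D}_1,\ldots,\mathbf{D}_k$ as the only source of randomness, and under them each coordinate of $\boldsymbol{\Pi}_d\tilde{\mathbf{x}}$ is a Rademacher sum. The variance of that sum reduces to the $\ell_2$ mass of one block of the permuted input. Once this is in place, the proposition becomes a concentration statement for the block masses of a uniformly random permutation of the squared entries of $\tilde{\mathbf{x}}$. That statement follows from a Hoeffding-type inequality for sampling without replacement, combined with a union bound over the $k$ blocks.

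\emph{Step 1: conditional mean and variance.} Write $\mathbf{u} = \mathbf{P}_\pi\tilde{\mathbf{x}}$ and split it into blocks $\mathbf{u}^{(1)},\ldots,\mathbf{u}^{(k)}\in\mathbb{R}^h$. A coordinate $z_i$ lying in block $j$ equals
\[
z_i = \sum_{l=1}^h (\mathbf{H}_h)_{il}\,\epsilon_{jl}\,u^{(j)}_l ,
\]
where $\epsilon_{jl}$ are the diagonal entries of $\mathbf{D}_j$. These signs are independent, mean zero, and independent of $\pi$. Hence $\mathbb{E}[z_i\mid\pi]=0$, which gives centering. With the normalized Hadamard matrix we have $(\mathbf{H}_h)_{il}^2 = 1/h$, so
\[
\mathrm{Var}(z_i\mid\pi)=\tfrac1h\,M_j, \qquad M_j=\|\mathbf{u}^{(j)}\|_2^2 .
\]
The target interval is therefore equivalent to $|M_j - h/d| \le \rho h/d = \rho/k$ for every block $j$.

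\emph{Step 2: concentration of block masses.} Set $a_l=\tilde{x}_l^2\in[0,\mu_\infty]$, so that $\sum_l a_l = 1$. Since $\pi$ is uniform, $M_j$ is the sum of $h$ values drawn without replacement from the population $\{a_l\}$. Its mean is $h/d = 1/k$. Hoeffding's inequality extends to sampling without replacement (Hoeffding 1963, Theorem 4), which gives
\[
\Pr\big(|M_j - 1/k|\ge t\big)\le 2\exp\!\big(-2t^2/(h\mu_\infty^2)\big).
\]
Taking a union bound over the $k$ blocks and setting the total failure probability to $\delta$ yields
\[
t=\mu_\infty\sqrt{\tfrac h2\log\tfrac{2k}{\delta}} .
\]

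\emph{Step 3: matching the stated $\rho$.} We need $t\le\rho/k$ with $\rho/k = h\mu_\infty\sqrt{\tfrac1{2h}\log\tfrac{4k}{\delta}} = \mu_\infty\sqrt{\tfrac h2\log\tfrac{4k}{\delta}}$. Since $\log(2k/\delta)\le\log(4k/\delta)$, this holds with room to spare; the extra factor $2$ inside the logarithm leaves slack for, e.g., splitting $\delta$ between the upper and lower tails separately. On the resulting event, every block satisfies the bound, so every coordinate does as well.

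The main obstacle is Step 2, namely justifying Hoeffding's inequality under dependent sampling. Hoeffding's reduction shows that convex functionals of a without-replacement sum are dominated by those of the with-replacement sum, so the exponential moment bound carries over. Alternatively, one could invoke Serfling's inequality, which is slightly sharper. One should also note that the bound is informative only when $\rho<1$, that is, when $\mu_\infty$ is small; when $\rho\ge1$ the lower endpoint of the interval is trivially satisfied.
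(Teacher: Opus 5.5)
Your proof is correct and matches the paper's argument: conditional on $\pi$, each coordinate is a Rademacher sum with variance $M_j/h$, and the block masses concentrate by Hoeffding's inequality for sampling without replacement plus a union bound over the $k$ blocks (the paper's Lemma~\ref{lem:balancing}). The only difference is bookkeeping. The paper gives the mass-balancing event probability $1-\delta/2$ because its restated proposition also spends $\delta/2$ on an $\ell_\infty$ bound, and that is where $\log(4k/\delta)$ comes from (not from splitting tails, which the factor $2$ in Hoeffding already covers). You spend all of $\delta$ on the permutation and obtain the slightly sharper $\log(2k/\delta)$, which is valid for the variance claim alone.
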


Since $\rho$ stays small unless one coordinate carries an outsized share of the norm, the variance bound of Equation~\eqref{eq:prop-var} keeps the marginal of $\boldsymbol{\Pi}_d \tilde{\mathbf{x}}$ close to $\mathcal{N}(0, 1/d)$ and the Lloyd--Max codebook near-optimal. We prove the proposition in the supplementary material~\ref{sec:supp-proof-srht}. Section~\ref{sec:ablation:rotation} confirms that removing the permutation degrades low-bit robustness.

\subsection{Data-agnostic Codebook}
\label{sec:method:codebook}
Prior PTQ methods recalibrate because the activation range shifts with the timestep and prompt. OrbitQuant removes this dependence at the source. By Proposition~\ref{prop:marginal}, every coordinate of a normalized, RPBH-rotated activation stays close to the same marginal $f_d$, fixed by the dimension $d$ alone. We therefore run Lloyd--Max on $f_d$ offline to obtain a single codebook $\mathcal{C}_d$ per dimension, and quantizing reduces to normalizing, rotating, and mapping each coordinate to its nearest centroid, with no input statistics collected. One $\mathcal{C}_d$ serves every timestep, prompt, layer, and the weight rows of dimension $d$, which is what makes OrbitQuant calibration-free.

This codebook follows TurboQuant~\cite{zandieh2025turboquant}, which quantizes randomly rotated vectors against a fixed Beta-marginal codebook computed once. TurboQuant is a standalone vector quantizer for KV-cache and vector-database compression. It uses a dense $O(d^2)$ Haar rotation and operates as a quantize-dequantize codec, reconstructing each vector before use. OrbitQuant instead pairs the codebook with the structured RPBH rotation and absorbs it into the weights. The quantized operands then feed each linear layer directly, with no reconstruction and only a forward rotation at inference.
Figure~\ref{fig:distribution} confirms the marginal matching at an attention projection ($d{=}3072$) and a feed-forward layer ($d{=}12288$). Raw activations deviate sharply from $f_d$, but after the RPBH rotation both weights and activations match $f_d \approx \mathcal{N}(0,1/d)$ as closely as a dense Haar rotation does, so a single codebook built from $f_d$ fits them all.

\section{Experiments}
\label{sec:exp}

\subsection{Setup}
\label{sec:exp:setup}

\begin{table*}[t]
\caption{GenEval results on three image diffusion transformers at W4A4 and W2A4. Values are scores on six compositional sub-tasks and Overall. \textbf{Bold} and \underline{underlined} entries indicate the best and second-best result within each (model, bit-width) group. $\uparrow$ means higher is better. $\dagger$ represents our implementation.}
\vspace{-2mm}
\label{tab:geneval}
\small
\centering
\resizebox{0.9\textwidth}{!}{%
\begin{tabular}{clcccccccc}
\toprule
Model & Method & Bit & Single object~$\uparrow$ & Two object~$\uparrow$ & Counting~$\uparrow$ & Colors~$\uparrow$ & Position~$\uparrow$ & Color attribution~$\uparrow$ & Overall~$\uparrow$ \\
\midrule
\multirow{12}{*}{FLUX.1-schnell}
  & FP16                                  & 16/16 & 0.997 & 0.884 & 0.600 & 0.742 & 0.275 & 0.488 & 0.664 \\
  \cdashline{2-10}\\[-1.75ex]
  & Q-DiT~\cite{chen2025q}                  & W4A4 & 0.741 & 0.424 & 0.378 & 0.418 & 0.073 & 0.208 & 0.373 \\
  & SmoothQuant~\cite{xiao2023smoothquant} & W4A4 & 0.619 & 0.293 & 0.272 & 0.317 & 0.043 & 0.143 & 0.281 \\
  & QuaRot~\cite{ashkboos2024quarot}      & W4A4 & 0.819 & 0.543 & 0.472 & 0.519 & 0.118 & 0.275 & 0.458 \\
  & ViDiT-Q~\cite{zhao2024vidit}          & W4A4 & 0.888 & 0.586 & 0.516 & 0.585 & 0.130 & 0.268 & 0.495 \\
  & SVDQuant~\cite{li2024svdquant}        & W4A4 & \underline{0.994} & \textbf{0.910} & 0.450 & 0.708 & 0.260 & 0.420 & 0.624 \\
  & AdaTSQ~\cite{zhang2026adatsq}         & W4A4 & \textbf{0.997} & \underline{0.894} & \underline{0.622} & \underline{0.793} & \underline{0.278} & \underline{0.498} & \underline{0.680} \\
  & \cellcolor{gray!25}OrbitQuant            & \cellcolor{gray!25}W4A4 & \cellcolor{gray!25}0.991 & \cellcolor{gray!25}0.881 & \cellcolor{gray!25}\textbf{0.706} & \cellcolor{gray!25}\textbf{0.803} & \cellcolor{gray!25}\textbf{0.323} & \cellcolor{gray!25}\textbf{0.512} & \cellcolor{gray!25}\textbf{0.703} \\
  \cdashline{2-10}\\[-1.75ex]
  & QuaRot$\dagger$~\cite{ashkboos2024quarot}      & W2A4 & 0.006 & 0.000 & 0.000 & 0.000 & 0.000 & 0.000 & 0.001 \\
  & SmoothQuant$\dagger$~\cite{xiao2023smoothquant} & W2A4 & 0.000 & 0.000 & 0.000 & 0.000 & 0.000 & 0.000 & 0.000 \\
  & ViDiT-Q$\dagger$~\cite{zhao2024vidit}          & W2A4 & 0.006 & 0.000 & 0.000 & 0.000 & 0.000 & 0.000 & 0.001 \\
  & \cellcolor{gray!25}OrbitQuant            & \cellcolor{gray!25}W2A4 & \cellcolor{gray!25}\textbf{0.972} & \cellcolor{gray!25}\textbf{0.697} & \cellcolor{gray!25}\textbf{0.575} & \cellcolor{gray!25}\textbf{0.766} & \cellcolor{gray!25}\textbf{0.198} & \cellcolor{gray!25}\textbf{0.420} & \cellcolor{gray!25}\textbf{0.604} \\
\midrule
\multirow{12}{*}{FLUX.1-dev}
  & FP16                                  & 16/16 & 0.984 & 0.823 & 0.769 & 0.771 & 0.203 & 0.450 & 0.667 \\
  \cdashline{2-10}\\[-1.75ex]
  & Q-DiT~\cite{chen2025q}                  & W4A4 & 0.047 & 0.000 & 0.009 & 0.024 & 0.000 & 0.003 & 0.014 \\
  & SmoothQuant~\cite{xiao2023smoothquant} & W4A4 & 0.003 & 0.000 & 0.003 & 0.011 & 0.000 & 0.000 & 0.007 \\
  & QuaRot~\cite{ashkboos2024quarot}      & W4A4 & 0.634 & 0.106 & 0.294 & 0.346 & 0.025 & 0.050 & 0.243 \\
  & ViDiT-Q~\cite{zhao2024vidit}          & W4A4 & 0.709 & 0.147 & 0.325 & 0.410 & 0.028 & 0.060 & 0.280 \\
  & SVDQuant~\cite{li2024svdquant}        & W4A4 & \underline{0.981} & 0.710 & 0.610 & 0.698 & 0.140 & 0.300 & 0.573 \\
  & AdaTSQ~\cite{zhang2026adatsq}         & W4A4 & \underline{0.981} & \textbf{0.770} & \underline{0.640} & \underline{0.708} & \textbf{0.260} & \underline{0.350} & \underline{0.618} \\
  & \cellcolor{gray!25}OrbitQuant            & \cellcolor{gray!25}W4A4 & \cellcolor{gray!25}\textbf{0.988} & \cellcolor{gray!25}\underline{0.768} & \cellcolor{gray!25}\textbf{0.691} & \cellcolor{gray!25}\textbf{0.755} & \cellcolor{gray!25}\underline{0.178} & \cellcolor{gray!25}\textbf{0.420} & \cellcolor{gray!25}\textbf{0.633} \\
  \cdashline{2-10}\\[-1.75ex]
  & QuaRot$\dagger$~\cite{ashkboos2024quarot}      & W2A4 &0.006 & 0.000 & 0.000 & 0.000 & 0.000 & 0.000 & 0.001 \\
  & SmoothQuant$\dagger$~\cite{xiao2023smoothquant} & W2A4 & 0.000 & 0.000 & 0.000 & 0.000 & 0.000 & 0.000 & 0.000 \\
  & ViDiT-Q$\dagger$~\cite{zhao2024vidit}          & W2A4 & 0.006 & 0.000 & 0.000 & 0.000 & 0.000 & 0.000 & 0.001 \\
& \cellcolor{gray!25}OrbitQuant            & \cellcolor{gray!25}W2A4 & \cellcolor{gray!25}\textbf{0.956} & \cellcolor{gray!25}\textbf{0.424} & \cellcolor{gray!25}\textbf{0.481} & \cellcolor{gray!25}\textbf{0.678} & \cellcolor{gray!25}\textbf{0.110} & \cellcolor{gray!25}\textbf{0.203} & \cellcolor{gray!25}\textbf{0.475} \\
\midrule
\multirow{11}{*}{Z-Image-Turbo}
  & FP16                                  & 16/16 & 1.000 & 0.907 & 0.709 & 0.859 & 0.468 & 0.583 & 0.754 \\
  \cdashline{2-10}\\[-1.75ex]
  & SmoothQuant~\cite{xiao2023smoothquant} & W4A4 & 0.003 & 0.000 & 0.000 & 0.000 & 0.000 & 0.000 & 0.000 \\
  & QuaRot~\cite{ashkboos2024quarot}      & W4A4 & 0.906 & 0.505 & 0.416 & 0.692 & 0.250 & 0.343 & 0.519 \\
  & ViDiT-Q~\cite{zhao2024vidit}          & W4A4 & 0.972 & 0.705 & 0.584 & 0.777 & 0.435 & 0.533 & 0.668 \\
  & SVDQuant~\cite{li2024svdquant}        & W4A4 & \underline{0.994} & 0.843 & 0.633 & 0.833 & \underline{0.485} & 0.520 & 0.718 \\
  & AdaTSQ~\cite{zhang2026adatsq}         & W4A4 & \underline{0.994} & \textbf{0.891} & \underline{0.681} & \underline{0.872} & \textbf{0.520} & \textbf{0.613} & \underline{0.762} \\
  & \cellcolor{gray!25}OrbitQuant            & \cellcolor{gray!25}W4A4 & \cellcolor{gray!25}\textbf{0.997} & \cellcolor{gray!25}\underline{0.889} & \cellcolor{gray!25}\textbf{0.781} & \cellcolor{gray!25}\textbf{0.888} & \cellcolor{gray!25}0.450 & \cellcolor{gray!25}\underline{0.598} & \cellcolor{gray!25}\textbf{0.767} \\
  \cdashline{2-10}\\[-1.75ex]
  & QuaRot$\dagger$~\cite{ashkboos2024quarot}      & W2A4 & 0.006 & 0.000 & 0.000 & 0.000 & 0.000 & 0.000 & 0.001 \\
  & SmoothQuant$\dagger$~\cite{xiao2023smoothquant} & W2A4 & 0.006 & 0.000 & 0.000 & 0.000 & 0.000 & 0.000 & 0.001 \\
  & ViDiT-Q$\dagger$~\cite{zhao2024vidit}          & W2A4 & 0.003 & 0.000 & 0.000 & 0.003 & 0.000 & 0.000 & 0.001 \\
  & \cellcolor{gray!25}OrbitQuant            & \cellcolor{gray!25}W2A4 & \cellcolor{gray!25}\textbf{0.703} & \cellcolor{gray!25}\textbf{0.194} & \cellcolor{gray!25}\textbf{0.275} & \cellcolor{gray!25}\textbf{0.500} & \cellcolor{gray!25}\textbf{0.128} & \cellcolor{gray!25}\textbf{0.113} & \cellcolor{gray!25}\textbf{0.319} \\
\bottomrule
\end{tabular}}
\vspace{-5mm}
\end{table*}

\paragraph{Models and bit-widths.}
We evaluate OrbitQuant on three image DiTs and two video DiTs. For image generation we report FLUX.1-schnell (4-step, guidance 0.0), FLUX.1-dev (50-step, guidance 3.5), and Z-Image-Turbo (10-step, guidance 0.0) at W4A4 and W2A4. For video generation we report Wan~2.1-1.3B (81 frames, $480{\times}832$, 50 steps, CFG 5.0) and CogVideoX-2B (49 frames, $480{\times}720$, 50 steps, CFG 6.0) at W4A6 and W4A4. We quantize all transformer-block projections with OrbitQuant and keep the adaptive layer normalization (AdaLN) modulation projections, where present, at INT4 weight round-to-nearest (RTN)~\cite{li2024svdquant}. This AdaLN treatment is identical across all methods we implement. Wan~2.1-1.3B has no AdaLN modulation, so only its transformer-block projections are quantized. The full list of quantized and skipped layers is given in the supplementary material~\ref{sec:supp-additional}.


\paragraph{Baselines.}
Image baselines are the calibration-based SVDQuant~\cite{li2024svdquant}, AdaTSQ~\cite{zhang2026adatsq}, and ViDiT-Q~\cite{zhao2024vidit}, together with Q-DiT~\cite{chen2025q}, QuaRot~\cite{ashkboos2024quarot}, and SmoothQuant~\cite{xiao2023smoothquant}. Video baselines are ViDiT-Q, SVDQuant, QuaRot, and SmoothQuant. Baseline numbers are taken primarily from AdaTSQ~\cite{zhang2026adatsq} for image and QVGen~\cite{huang2025qvgen} for video.

\subsection{Image generation: GenEval}
\label{sec:exp:image}
Table~\ref{tab:geneval} reports GenEval Overall and per-task scores. At W4A4, OrbitQuant is essentially lossless, exceeding FP16 on Overall on FLUX.1-schnell and Z-Image-Turbo and trailing it by 0.034 on FLUX.1-dev, while outperforming every PTQ baseline to set the state of the art on GenEval. The advantage widens at W2A4, where the rotation and smoothing baselines collapse to near-zero on all three backbones while OrbitQuant stays functional, retaining most of its quality on the FLUX models and remaining the only method that produces meaningful scores on Z-Image-Turbo. Results at W3A3 and W2A3 are presented in the supplementary material~\ref{supple:low_bit}.

\begin{table*}[t]
\caption{VBench PTQ video-generation results on Wan~2.1-1.3B and CogVideoX-2B. Scores are percentages. \textbf{Bold} and \underline{underlined} entries indicate the best and second-best result within each (model, bit-width) group. $\dagger$ represents our implementation.}
\vspace{-2mm}
\label{tab:vbench}
\centering
\resizebox{0.9\textwidth}{!}{%
\begin{tabular}{clccccccccc}
\toprule
Model & Method & Bit & \makecell{Imaging\\Quality~$\uparrow$} & \makecell{Aesthetic\\Quality~$\uparrow$} & \makecell{Motion\\Smoothness~$\uparrow$} & \makecell{Dynamic\\Degree~$\uparrow$} & \makecell{Background\\Consistency~$\uparrow$} & \makecell{Subject\\Consistency~$\uparrow$} & Scene~$\uparrow$ & \makecell{Overall\\Consistency~$\uparrow$} \\
\midrule
\multirow{11}{*}{Wan~2.1-1.3B}
  & Full Prec. & 16/16 & 64.30 & 58.21 & 97.37 & 70.28 & 95.94 & 93.84 & 28.05 & 24.67 \\
  \cdashline{2-11}\\[-1.75ex]
  & SmoothQuant$\dagger$~\cite{xiao2023smoothquant} & W4A6 & 53.51 & 49.19 & \textbf{98.01} & 34.44 & 94.89 & \underline{92.66} & 12.81 & 22.15 \\
  & QuaRot$\dagger$~\cite{ashkboos2024quarot}       & W4A6 & 56.92 & 50.36 & 96.94 & \underline{54.17} & \underline{95.36} & 91.65 & \underline{14.88} & 22.65 \\
  & ViDiT-Q~\cite{zhao2024vidit}           & W4A6 & 56.24 & 50.18 & 94.81 & 52.43 & 89.67 & 82.53 & 13.45 & 19.58 \\
  & SVDQuant~\cite{li2024svdquant}         & W4A6 & \underline{58.16} & \underline{51.27} & 97.05 & 49.44 & 93.74 & 91.71 & 14.18 & \underline{23.26} \\
  & \cellcolor{gray!25}OrbitQuant & \cellcolor{gray!25}W4A6 & \cellcolor{gray!25}\textbf{61.25} & \cellcolor{gray!25}\textbf{56.08} & \cellcolor{gray!25}\underline{97.76} & \cellcolor{gray!25}\textbf{59.78} & \cellcolor{gray!25}\textbf{95.51} & \cellcolor{gray!25}\textbf{94.23} & \cellcolor{gray!25}\textbf{24.88} & \cellcolor{gray!25}\textbf{24.35} \\
  \cdashline{2-11}\\[-1.75ex]
  & SmoothQuant$\dagger$~\cite{xiao2023smoothquant} & W4A4 & 46.32 & 36.33 & \underline{96.39} & 51.94 & \underline{95.85} & \underline{90.39} & 2.79 & 15.05 \\
  & QuaRot$\dagger$~\cite{ashkboos2024quarot}       & W4A4 & 51.42 & 40.49 & 96.21 & 52.78 & 95.76 & 88.80 & 5.31 & 17.98 \\
  & ViDiT-Q$\dagger$~\cite{zhao2024vidit}           & W4A4 & 44.51 & 36.43 & 96.16 & \underline{58.06} & \textbf{95.92} & 89.59 & 1.85 & 13.11 \\
  & SVDQuant~\cite{li2024svdquant}         & W4A4 & \underline{57.57} & \underline{46.30} & 94.21 & \textbf{72.22} & 93.16 & 77.96 & \underline{12.73} & \underline{21.91} \\
  & \cellcolor{gray!25}OrbitQuant & \cellcolor{gray!25}W4A4 & \cellcolor{gray!25}\textbf{58.58} & \cellcolor{gray!25}\textbf{53.41} & \cellcolor{gray!25}\textbf{97.42} & \cellcolor{gray!25}53.89 & \cellcolor{gray!25}95.30 & \cellcolor{gray!25}\textbf{92.98} & \cellcolor{gray!25}\textbf{18.81} & \cellcolor{gray!25}\textbf{23.86} \\
\midrule
\multirow{11}{*}{CogVideoX-2B}
  & Full Prec. & 16/16 & 59.15 & 54.49 & 97.43 & 67.78 & 94.79 & 92.82 & 36.24 & 25.06 \\
  \cdashline{2-11}\\[-1.75ex]
  & SmoothQuant$\dagger$~\cite{xiao2023smoothquant} & W4A6 & 51.50 & 49.70 & \textbf{97.20} & 30.00 & \underline{94.70} & 91.10 & 21.90 & 23.20 \\
  & QuaRot$\dagger$~\cite{ashkboos2024quarot}       & W4A6 & 54.11 & \underline{52.25} & 96.92 & \underline{49.72} & 94.60 & \underline{91.82} & \underline{30.73} & \underline{24.03} \\
  & ViDiT-Q~\cite{zhao2024vidit}           & W4A6 & 54.72 & 43.01 & 92.18 & 43.22 & 90.76 & 81.02 & 26.25 & 20.41 \\
  & SVDQuant~\cite{li2024svdquant}         & W4A6 & \textbf{58.27} & 47.06 & 95.28 & 40.83 & 92.41 & 87.45 & 27.69 & 21.34 \\
  & \cellcolor{gray!25}OrbitQuant & \cellcolor{gray!25}W4A6 & \cellcolor{gray!25}\underline{55.59} & \cellcolor{gray!25}\textbf{54.42} & \cellcolor{gray!25}\underline{97.02} & \cellcolor{gray!25}\textbf{57.50} & \cellcolor{gray!25}\textbf{94.78} & \cellcolor{gray!25}\textbf{92.56} & \cellcolor{gray!25}\textbf{32.51} & \cellcolor{gray!25}\textbf{24.55} \\
  \cdashline{2-11}\\[-1.75ex]
  & SmoothQuant$\dagger$~\cite{xiao2023smoothquant} & W4A4 & 39.90 & 35.50 & \textbf{97.80} & 1.90 & \textbf{95.90} & \textbf{92.90} & 3.60 & 12.80 \\
  & QuaRot$\dagger$~\cite{ashkboos2024quarot}       & W4A4 & 49.60 & 47.10 & 96.90 & 9.20 & 94.80 & 90.20 & 19.70 & 21.70 \\
  & ViDiT-Q$\dagger~$\cite{zhao2024vidit}           & W4A4 & 44.80 & 42.10 & 97.30 & 4.40 & \underline{95.60} & 90.30 & 10.50 & 18.40 \\
  & SVDQuant~\cite{li2024svdquant}         & W4A4 & \underline{51.60} & \underline{49.40} & \underline{97.69} & \underline{42.22} & 94.03 & \underline{91.78} & \underline{25.67} & \underline{22.89} \\
  & \cellcolor{gray!25}OrbitQuant & \cellcolor{gray!25}W4A4 & \cellcolor{gray!25}\textbf{52.62} & \cellcolor{gray!25}\textbf{51.66} & \cellcolor{gray!25}96.99 & \cellcolor{gray!25}\textbf{42.78} & \cellcolor{gray!25}94.50 & \cellcolor{gray!25}91.65 & \cellcolor{gray!25}\textbf{28.53} & \cellcolor{gray!25}\textbf{23.86} \\
\bottomrule
\end{tabular}}
\vspace{-2mm}
\end{table*}

\begin{figure*}[t]
\centering
\includegraphics[width=\linewidth]{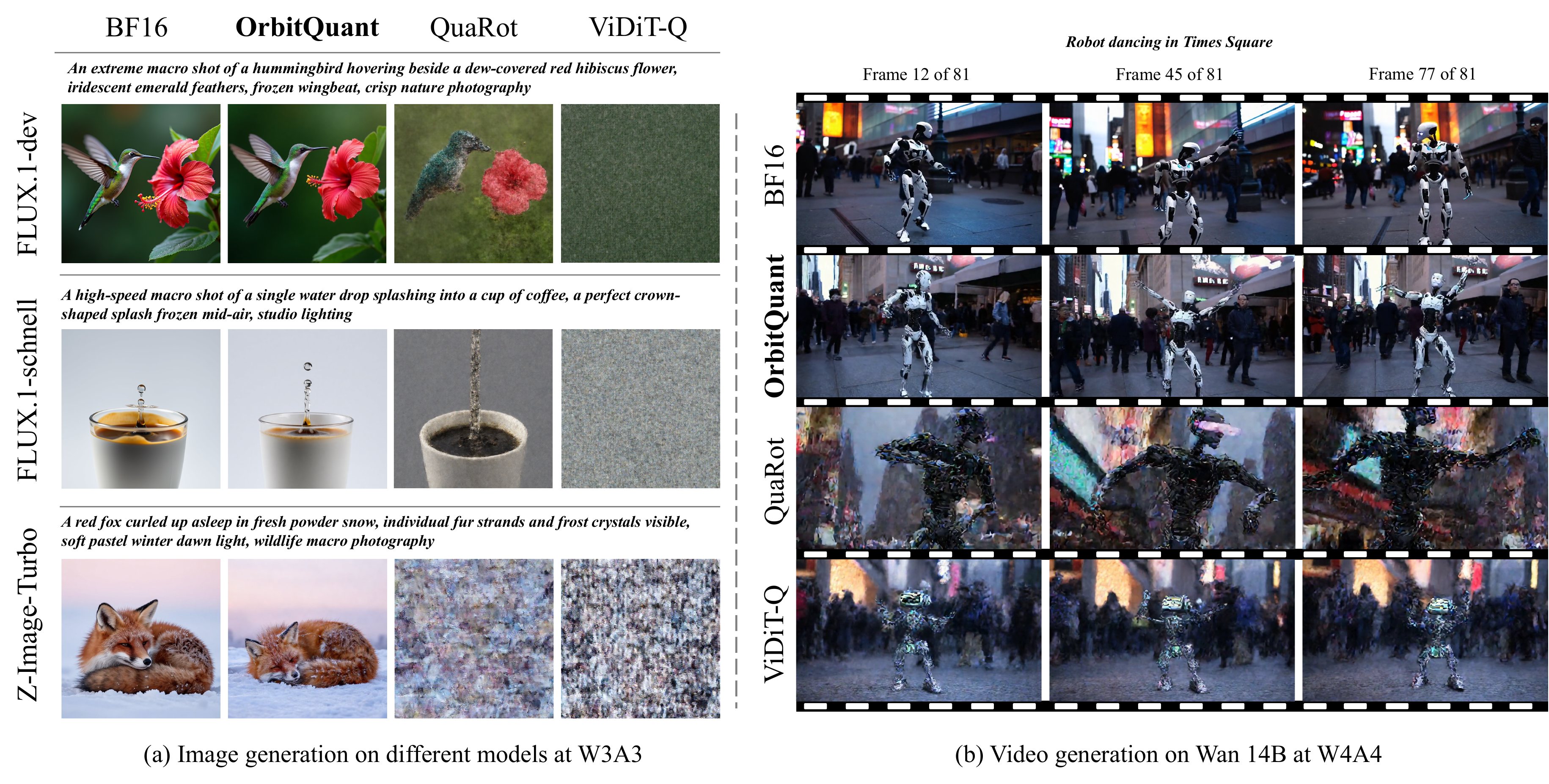}
\vspace{-7mm}
\caption{Qualitative comparison of OrbitQuant against QuaRot~\cite{ashkboos2024quarot} and ViDiT-Q~\cite{zhao2024vidit}, with the BF16 full-precision output shown for reference. \textbf{(a)} Image generation at W3A3 on FLUX.1-dev, FLUX.1-schnell, and Z-Image-Turbo, with one prompt per model. \textbf{(b)} Video generation at W4A4 on Wan 14B, showing three sampled frames per method.}
\label{fig:qualitative}
\vspace{-2.5mm}
\end{figure*}

\subsection{Video generation: VBench}
\label{sec:exp:video}
OrbitQuant applies to Wan~2.1-1.3B~\cite{wan2025wan} and CogVideoX-2B~\cite{yang2025cogvideox} with the identical recipe used for the image experiments, and Table~\ref{tab:vbench} reports the full VBench comparison. At W4A6, OrbitQuant is the strongest PTQ method on both backbones, leading on Overall Consistency and most per-dimension scores over the calibration-based ViDiT-Q~\cite{zhao2024vidit} and SVDQuant~\cite{li2024svdquant}. The advantage holds at W4A4, where the baselines lose ground on the harder dimensions while OrbitQuant stays closest on most dimensions, again ranking first on Overall Consistency on both backbones. Comparisons against quantization-aware training (QAT) and results on the huge model, including Wan~14B~\cite{wan2025wan} and HunyuanVideo~\cite{kong2024hunyuanvideo}, are given in the supplementary material~\ref{supple:add-exp}.



\subsection{Qualitative Comparison}
\label{sec:exp:qualitative}
Figure~\ref{fig:qualitative} shows generations from OrbitQuant, QuaRot~\cite{ashkboos2024quarot}, and ViDiT-Q~\cite{zhao2024vidit} alongside the BF16 reference. For images at W3A3, OrbitQuant stays close to BF16 on FLUX.1-dev, FLUX.1-schnell, and Z-Image-Turbo, retaining fine structure and color, while the other methods lose fidelity and collapse to noise on Z-Image-Turbo. For Wan~14B video at W4A4, OrbitQuant preserves scene layout and stays consistent across frames, whereas the other methods drift in color and structure.

\subsection{Latency and Memory Analysis}
We measure end-to-end latency and peak memory on FLUX.1-dev for image (NVIDIA H100, $1024^2$, $50$ steps, guidance $3.5$) and on Wan~2.1-1.3B for video ($480{\times}832$, $81$ frames, $50$ steps, CFG $5.0$). All methods are evaluated under fake quantization, with weights and activations dequantized to BF16 and the matmul computed in BF16. The comparison therefore measures quantization overhead, not realized low-bit speedup. As shown in Figure~\ref{fig:efficiency}, OrbitQuant has the lowest overhead among the weight-and-activation quantization methods on both, with SmoothQuant~\cite{xiao2023smoothquant}, QuaRot~\cite{ashkboos2024quarot}, and ViDiT-Q~\cite{zhao2024vidit} running $1.09\times$, $1.28\times$, and $1.40\times$ slower on image and in the same relative order on video. OrbitQuant keeps the lowest peak memory on image, matching the unquantized footprint.

OrbitQuant has the lowest overhead because its activation quantization is a fixed, shared-codebook nearest-centroid lookup, which empirically undercuts the dynamic per-token uniform quantization of QuaRot and the additional channel-wise smoothing of SmoothQuant and ViDiT-Q. On video, where activations dominate, the lookup materializes an index and gather tensor that lifts OrbitQuant's peak memory above QuaRot and SmoothQuant (20.3 vs 19.3~GB), though still below ViDiT-Q (23.2~GB).

\begin{figure}[t]
\centering
\includegraphics[width=\linewidth]{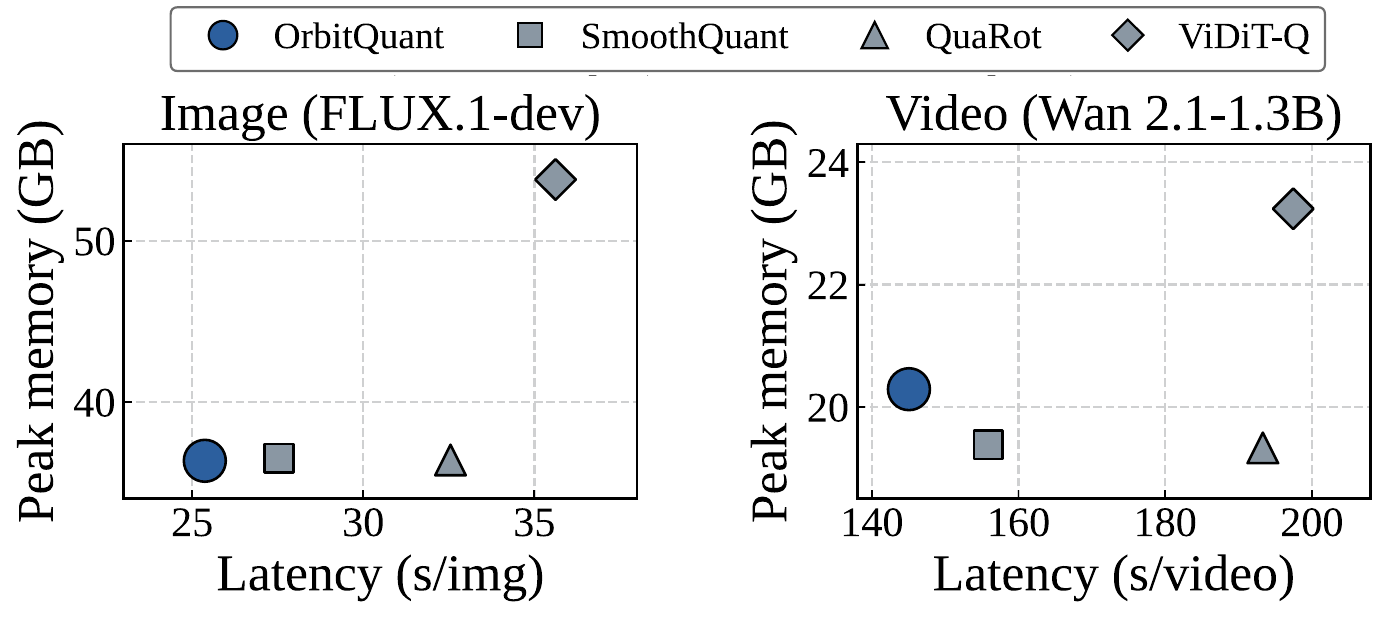}
\caption{Latency and peak memory together, with lower-left being better. The left panel is image generation on FLUX.1-dev, the right panel video on Wan~2.1-1.3B.}

\label{fig:efficiency}
\end{figure}

\section{Ablations}
\label{sec:ablation}

\begin{table}[t]
\centering
\caption{Rotation-class ablation on FLUX.1-schnell. GenEval Overall (mean over three seeds) at three bit-widths, and the per-image activation-rotation latency at $1024^2$ on an H100 (rotation cost only, summed over all quantized layers and denoising steps).}
\label{tab:rotation}
\resizebox{0.85\columnwidth}{!}{%
\begin{tabular}{lcccc}
\toprule
Rotation & W4A4 & W3A3 & W2A4 & Latency(s) \\
\midrule
Haar  & \textbf{0.696} & 0.669 & 0.591 & 11.65 \\
Full RHT  & 0.691 & 0.672 & 0.587 & 0.452 \\
Block-RHT    & 0.678 & 0.642 & 0.558 & 0.381 \\
\rowcolor{gray!15}
\textbf{RPBH (ours)}                & 0.690 & \textbf{0.674} & \textbf{0.595}  & \textbf{0.451} \\
\bottomrule
\end{tabular}}
\vspace{-2mm}
\end{table}

\subsection{Comparison between Rotation Matrix}
\label{sec:ablation:rotation}
The forward identity of Section~\ref{sec:method:act} holds for any orthogonal rotation, but codebook compatibility requires the rotated marginal to match $f_d$. We compare four rotations inside an otherwise identical OrbitQuant pipeline on FLUX.1-schnell. Table~\ref{tab:rotation} reports GenEval Overall at three bit-widths and the per-image activation rotation latency on an H100.
At W4A4 the four rotations are within noise. They separate at lower bit-widths, where RPBH is the strongest at W3A3 and W2A4, ahead of the dense Haar, the permutation-free Block-Randomized Hadamard Transform (Block-RHT), and the Full RHT of the kind used by rotation-based LLM quantizers~\cite{ashkboos2024quarot, tseng2024quip}. The random permutation drives the gap over Block-RHT, which is RPBH without the permutation. It spreads clustered outliers across blocks so the rotated marginal stays close to $f_d$, which a fixed codebook can quantize at low bit-width. The structured rotations admit a fast Hadamard transform kernel that the dense Haar cannot use, running an order of magnitude faster ($26\times$). Among them RPBH adds 0.070\,s over Block-RHT and is no slower than the Full RHT, while remaining constructible on every dimension in our study, including $d{=}1920$ of CogVideoX-2B where no fast size-$d$ Hadamard kernel exists.

\subsection{AdaLN bit-width}
\label{sec:ablation:adaln}
OrbitQuant fixes AdaLN modulation projections at INT4 weight RTN regardless of the main bit-width, since their timestep-dependent scale-and-shift cannot be folded into neighboring weights. To isolate this choice, we hold the main model at W4A4 and vary only the AdaLN weight bit, keeping AdaLN activations in BF16. Figure~\ref{fig:adaln} reports GenEval Overall on three models.
Quantizing the AdaLN weights to INT4 nearly matches the BF16 result on all three models, and lowering them further degrades Overall in a model-dependent way. At W3 all three models hold, but at W2 FLUX.1-dev and -schnell collapse, while Z-Image-Turbo stays robust. A low-bit AdaLN weight corrupts the modulation that every downstream layer reads. We still quantize these projections to INT4 rather than keep them in BF16, since they are $27\%$ of the weights and leaving them in BF16 would drop the FLUX model compression from $4\times$ to $2.21\times$ (Figure~\ref{fig:adaln}, right). Pushing them to W2 saves further memory while triggering this collapse on the FLUX models, so OrbitQuant keeps AdaLN at INT4.

\begin{figure}[t]
\centering
\includegraphics[width=\linewidth]{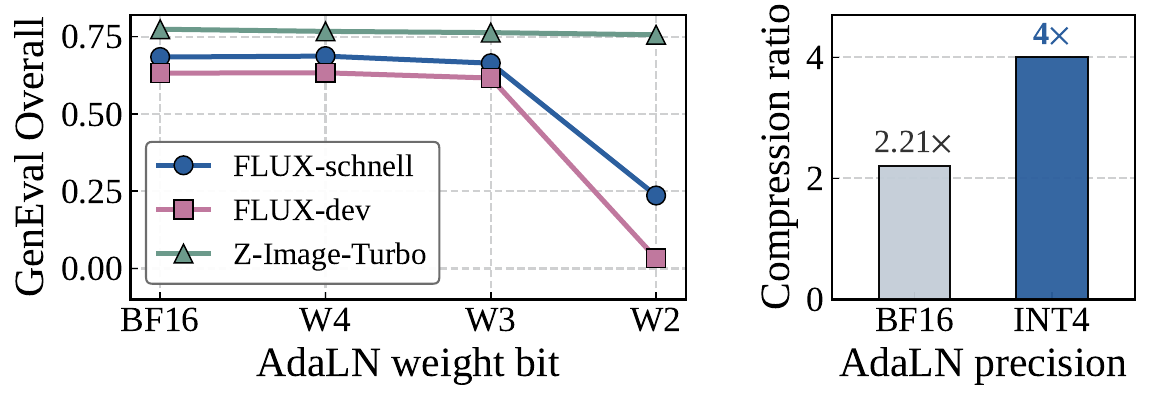}
\vspace{-3mm}
\caption{AdaLN bit-width ablation with the main model fixed at W4A4 and AdaLN activations in BF16. The left panel reports GenEval Overall as the AdaLN modulation weight bit drops. The right panel reports model compression on the FLUX architecture with the AdaLN weights in BF16 ($2.21\times$) and in INT4 ($4\times$).}
\label{fig:adaln}
\vspace{-2mm}
\end{figure}

\section{Conclusion}
We present OrbitQuant, a calibration-free weight-activation quantizer for diffusion transformers that replaces per-timestep range calibration with a single distributional codebook applied in a shared, rotated, normalized basis. The rotation is absorbed into the weights offline and cancels inside each linear layer, leaving only a forward RPBH rotation on the activations at runtime. Its random permutation is what keeps the rotated marginal well-behaved at low bit-width. Across FLUX.1, Z-Image-Turbo, Wan~2.1, and CogVideoX, the same recipe transfers from image to video with no per-modality tuning, sets the state of the art for PTQ on GenEval and VBench at low-bit settings, and supports usable 2-bit weights where prior PTQ methods collapse.

\clearpage
{
    \small
    \bibliographystyle{ieeenat_fullname}
    \bibliography{main}

@inproceedings{xiao2023smoothquant,
  title={Smoothquant: Accurate and efficient post-training quantization for large language models},
  author={Xiao, Guangxuan and Lin, Ji and Seznec, Mickael and Wu, Hao and Demouth, Julien and Han, Song},
  booktitle={International conference on machine learning},
  pages={38087--38099},
  year={2023},
  organization={PMLR}
}

@article{ashkboos2024quarot,
  title={Quarot: Outlier-free 4-bit inference in rotated llms},
  author={Ashkboos, Saleh and Mohtashami, Amirkeivan and Croci, Maximilian L and Li, Bo and Cameron, Pashmina and Jaggi, Martin and Alistarh, Dan and Hoefler, Torsten and Hensman, James},
  journal={Advances in Neural Information Processing Systems},
  volume={37},
  pages={100213--100240},
  year={2024}
}

@inproceedings{peebles2023scalable,
  title={Scalable diffusion models with transformers},
  author={Peebles, William and Xie, Saining},
  booktitle={Proceedings of the IEEE/CVF international conference on computer vision},
  pages={4195--4205},
  year={2023}
}

@inproceedings{esser2024scaling,
  title={Scaling rectified flow transformers for high-resolution image synthesis},
  author={Esser, Patrick and Kulal, Sumith and Blattmann, Andreas and Entezari, Rahim and M{\"u}ller, Jonas and Saini, Harry and Levi, Yam and Lorenz, Dominik and Sauer, Axel and Boesel, Frederic and others},
  booktitle={Forty-first international conference on machine learning},
  year={2024}
}

@article{singer2022make,
  title={Make-a-video: Text-to-video generation without text-video data},
  author={Singer, Uriel and Polyak, Adam and Hayes, Thomas and Yin, Xi and An, Jie and Zhang, Songyang and Hu, Qiyuan and Yang, Harry and Ashual, Oron and Gafni, Oran and others},
  journal={arXiv preprint arXiv:2209.14792},
  year={2022}
}

@inproceedings{bao2023all,
  title={All are worth words: A vit backbone for diffusion models},
  author={Bao, Fan and Nie, Shen and Xue, Kaiwen and Cao, Yue and Li, Chongxuan and Su, Hang and Zhu, Jun},
  booktitle={Proceedings of the IEEE/CVF conference on computer vision and pattern recognition},
  pages={22669--22679},
  year={2023}
}

@inproceedings{rombach2022high,
  title={High-resolution image synthesis with latent diffusion models},
  author={Rombach, Robin and Blattmann, Andreas and Lorenz, Dominik and Esser, Patrick and Ommer, Bj{\"o}rn},
  booktitle={Proceedings of the IEEE/CVF conference on computer vision and pattern recognition},
  pages={10684--10695},
  year={2022}
}

@article{saharia2022photorealistic,
  title={Photorealistic text-to-image diffusion models with deep language understanding},
  author={Saharia, Chitwan and Chan, William and Saxena, Saurabh and Li, Lala and Whang, Jay and Denton, Emily L and Ghasemipour, Kamyar and Gontijo Lopes, Raphael and Karagol Ayan, Burcu and Salimans, Tim and others},
  journal={Advances in neural information processing systems},
  volume={35},
  pages={36479--36494},
  year={2022}
}

@article{lin2024awq,
  title={Awq: Activation-aware weight quantization for on-device llm compression and acceleration},
  author={Lin, Ji and Tang, Jiaming and Tang, Haotian and Yang, Shang and Chen, Wei-Ming and Wang, Wei-Chen and Xiao, Guangxuan and Dang, Xingyu and Gan, Chuang and Han, Song},
  journal={Proceedings of machine learning and systems},
  volume={6},
  pages={87--100},
  year={2024}
}

@article{frantar2022gptq,
  title={Gptq: Accurate post-training quantization for generative pre-trained transformers},
  author={Frantar, Elias and Ashkboos, Saleh and Hoefler, Torsten and Alistarh, Dan},
  journal={arXiv preprint arXiv:2210.17323},
  year={2022}
}

@inproceedings{chen2025q,
  title={Q-dit: Accurate post-training quantization for diffusion transformers},
  author={Chen, Lei and Meng, Yuan and Tang, Chen and Ma, Xinzhu and Jiang, Jingyan and Wang, Xin and Wang, Zhi and Zhu, Wenwu},
  booktitle={Proceedings of the Computer Vision and Pattern Recognition Conference},
  pages={28306--28315},
  year={2025}
}

@article{wu2024ptq4dit,
  title={Ptq4dit: Post-training quantization for diffusion transformers},
  author={Wu, Junyi and Wang, Haoxuan and Shang, Yuzhang and Shah, Mubarak and Yan, Yan},
  journal={Advances in neural information processing systems},
  volume={37},
  pages={62732--62755},
  year={2024}
}

@article{chee2023quip,
  title={Quip: 2-bit quantization of large language models with guarantees},
  author={Chee, Jerry and Cai, Yaohui and Kuleshov, Volodymyr and De Sa, Christopher M},
  journal={Advances in neural information processing systems},
  volume={36},
  pages={4396--4429},
  year={2023}
}

@inproceedings{liu2025spinquant,
  title={Spinquant: Llm quantization with learned rotations},
  author={Liu, Zechun and Zhao, Changsheng and Fedorov, Igor and Soran, Bilge and Choudhary, Dhruv and Krishnamoorthi, Raghuraman and Chandra, Vikas and Tian, Yuandong and Blankevoort, Tijmen},
  booktitle={International Conference on Learning Representations},
  volume={2025},
  pages={92009--92032},
  year={2025}
}

@article{li2024svdquant,
  title={Svdquant: Absorbing outliers by low-rank components for 4-bit diffusion models},
  author={Li, Muyang and Lin, Yujun and Zhang, Zhekai and Cai, Tianle and Li, Xiuyu and Guo, Junxian and Xie, Enze and Meng, Chenlin and Zhu, Jun-Yan and Han, Song},
  journal={arXiv preprint arXiv:2411.05007},
  year={2024}
}

@article{zhao2024vidit,
  title={Vidit-q: Efficient and accurate quantization of diffusion transformers for image and video generation},
  author={Zhao, Tianchen and Fang, Tongcheng and Huang, Haofeng and Liu, Enshu and Wan, Rui and Soedarmadji, Widyadewi and Li, Shiyao and Lin, Zinan and Dai, Guohao and Yan, Shengen and others},
  journal={arXiv preprint arXiv:2406.02540},
  year={2024}
}

@article{tseng2024quip,
  title={Quip\#: Even better llm quantization with hadamard incoherence and lattice codebooks},
  author={Tseng, Albert and Chee, Jerry and Sun, Qingyao and Kuleshov, Volodymyr and De Sa, Christopher},
  journal={Proceedings of machine learning research},
  volume={235},
  pages={48630},
  year={2024}
}

@article{zhang2026adatsq,
  title={AdaTSQ: Pushing the Pareto Frontier of Diffusion Transformers via Temporal-Sensitivity Quantization},
  author={Zhang, Shaoqiu and Ding, Zizhong and Yang, Kaicheng and Wu, Junyi and Yan, Xianglong and Li, Xi and Duan, Bingnan and Fang, Jianping and Zhang, Yulun},
  journal={arXiv preprint arXiv:2602.09883},
  year={2026}
}

@article{zandieh2025turboquant,
  title={Turboquant: Online vector quantization with near-optimal distortion rate},
  author={Zandieh, Amir and Daliri, Majid and Hadian, Majid and Mirrokni, Vahab},
  journal={arXiv preprint arXiv:2504.19874},
  year={2025}
}

@article{ho2020denoising,
  title={Denoising diffusion probabilistic models},
  author={Ho, Jonathan and Jain, Ajay and Abbeel, Pieter},
  journal={Advances in neural information processing systems},
  volume={33},
  pages={6840--6851},
  year={2020}
}

@inproceedings{chen2024pixart,
  title={Pixart-$\sigma$: Weak-to-strong training of diffusion transformer for 4k text-to-image generation},
  author={Chen, Junsong and Ge, Chongjian and Xie, Enze and Wu, Yue and Yao, Lewei and Ren, Xiaozhe and Wang, Zhongdao and Luo, Ping and Lu, Huchuan and Li, Zhenguo},
  booktitle={European Conference on Computer Vision},
  pages={74--91},
  year={2024},
  organization={Springer}
}

@misc{labs2025flux1kontextflowmatching,
      title={FLUX.1 Kontext: Flow Matching for In-Context Image Generation and Editing in Latent Space},
      author={Black Forest Labs and Stephen Batifol and Andreas Blattmann and Frederic Boesel and Saksham Consul and Cyril Diagne and Tim Dockhorn and Jack English and Zion English and Patrick Esser and Sumith Kulal and Kyle Lacey and Yam Levi and Cheng Li and Dominik Lorenz and Jonas Müller and Dustin Podell and Robin Rombach and Harry Saini and Axel Sauer and Luke Smith},
      year={2025},
      eprint={2506.15742},
      archivePrefix={arXiv},
      primaryClass={cs.GR},
      url={https://arxiv.org/abs/2506.15742},
}

@article{xie2024sana,
  title={Sana: Efficient high-resolution image synthesis with linear diffusion transformers},
  author={Xie, Enze and Chen, Junsong and Chen, Junyu and Cai, Han and Tang, Haotian and Lin, Yujun and Zhang, Zhekai and Li, Muyang and Zhu, Ligeng and Lu, Yao and others},
  journal={arXiv preprint arXiv:2410.10629},
  year={2024}
}

@article{wu2025qwen,
  title={Qwen-image technical report},
  author={Wu, Chenfei and Li, Jiahao and Zhou, Jingren and Lin, Junyang and Gao, Kaiyuan and Yan, Kun and Yin, Sheng-ming and Bai, Shuai and Xu, Xiao and Chen, Yilei and others},
  journal={arXiv preprint arXiv:2508.02324},
  year={2025}
}

@article{cai2025z,
  title={Z-image: An efficient image generation foundation model with single-stream diffusion transformer},
  author={Cai, Huanqia and Cao, Sihan and Du, Ruoyi and Gao, Peng and Hoi, Steven and Hou, Zhaohui and Huang, Shijie and Jiang, Dengyang and Jin, Xin and Li, Liangchen and others},
  journal={arXiv preprint arXiv:2511.22699},
  year={2025}
}

@inproceedings{yang2025cogvideox,
  title={Cogvideox: Text-to-video diffusion models with an expert transformer},
  author={Yang, Zhuoyi and Teng, Jiayan and Zheng, Wendi and Ding, Ming and Huang, Shiyu and Xu, Jiazheng and Yang, Yuanming and Hong, Wenyi and Zhang, Xiaohan and Feng, Guanyu and others},
  booktitle={International Conference on Learning Representations},
  volume={2025},
  pages={83048--83077},
  year={2025}
}

@article{kong2024hunyuanvideo,
  title={Hunyuanvideo: A systematic framework for large video generative models},
  author={Kong, Weijie and Tian, Qi and Zhang, Zijian and Min, Rox and Dai, Zuozhuo and Zhou, Jin and Xiong, Jiangfeng and Li, Xin and Wu, Bo and Zhang, Jianwei and others},
  journal={arXiv preprint arXiv:2412.03603},
  year={2024}
}

@misc{sora2024,
  author={OpenAI},
  title={Sora: Creating Video from Text},
  year={2024},
  howpublished={\url{https://openai.com/sora}},
  note={Accessed: 2024-02-15},
}

@article{wan2025wan,
  title={Wan: Open and advanced large-scale video generative models},
  author={Wan, Team and Wang, Ang and Ai, Baole and Wen, Bin and Mao, Chaojie and Xie, Chen-Wei and Chen, Di and Yu, Feiwu and Zhao, Haiming and Yang, Jianxiao and others},
  journal={arXiv preprint arXiv:2503.20314},
  year={2025}
}

@article{huang2025qvgen,
  title={Qvgen: Pushing the limit of quantized video generative models},
  author={Huang, Yushi and Gong, Ruihao and Liu, Jing and Ding, Yifu and Lv, Chengtao and Qin, Haotong and Zhang, Jun},
  journal={arXiv preprint arXiv:2505.11497},
  year={2025}
}

@article{mezzadri2006generate,
  title={How to generate random matrices from the classical compact groups},
  author={Mezzadri, Francesco},
  journal={arXiv preprint math-ph/0609050},
  year={2006}
}

@article{lloyd1982least,
  title={Least squares quantization in PCM},
  author={Lloyd, Stuart},
  journal={IEEE transactions on information theory},
  volume={28},
  number={2},
  pages={129--137},
  year={1982},
  publisher={IEEE}
}

@article{max1960quantizing,
  title={Quantizing for minimum distortion},
  author={Max, Joel},
  journal={IRE Transactions on Information Theory},
  volume={6},
  number={1},
  pages={7--12},
  year={1960},
  publisher={IEEE}
}

@article{ailon2009fast,
  title={The fast Johnson--Lindenstrauss transform and approximate nearest neighbors},
  author={Ailon, Nir and Chazelle, Bernard},
  journal={SIAM Journal on computing},
  volume={39},
  number={1},
  pages={302--322},
  year={2009},
  publisher={SIAM}
}

@article{tropp2011improved,
  title={Improved analysis of the subsampled randomized Hadamard transform},
  author={Tropp, Joel A},
  journal={Advances in Adaptive Data Analysis},
  volume={3},
  number={01n02},
  pages={115--126},
  year={2011},
  publisher={World Scientific}
}

@article{li2023qdm,
  title={Q-dm: An efficient low-bit quantized diffusion model},
  author={Li, Yanjing and Xu, Sheng and Cao, Xianbin and Sun, Xiao and Zhang, Baochang},
  journal={Advances in neural information processing systems},
  volume={36},
  pages={76680--76691},
  year={2023}
}

@inproceedings{he2024efficientdm,
  title={Efficientdm: Efficient quantization-aware fine-tuning of low-bit diffusion models},
  author={He, Yefei and Liu, Jing and Wu, Weijia and Zhou, Hong and Zhuang, Bohan},
  booktitle={International Conference on Learning Representations},
  volume={2024},
  pages={15731--15750},
  year={2024}
}

@article{li2025dvd,
  title={DVD-Quant: Data-free Video Diffusion Transformers Quantization},
  author={Li, Zhiteng and Li, Hanxuan and Wu, Junyi and Liu, Kai and Qin, Haotong and Kong, Linghe and Chen, Guihai and Zhang, Yulun and Yang, Xiaokang},
  journal={arXiv preprint arXiv:2505.18663},
  year={2025}
}

@article{sun2024flatquant,
  title={Flatquant: Flatness matters for llm quantization},
  author={Sun, Yuxuan and Liu, Ruikang and Bai, Haoli and Bao, Han and Zhao, Kang and Li, Yuening and Hu, Jiaxin and Yu, Xianzhi and Hou, Lu and Yuan, Chun and others},
  journal={arXiv preprint arXiv:2410.09426},
  year={2024}
}

@article{hu2025ostquant,
  title={Ostquant: Refining large language model quantization with orthogonal and scaling transformations for better distribution fitting},
  author={Hu, Xing and Cheng, Yuan and Yang, Dawei and Xu, Zukang and Yuan, Zhihang and Yu, Jiangyong and Xu, Chen and Jiang, Zhe and Zhou, Sifan},
  journal={arXiv preprint arXiv:2501.13987},
  year={2025}
}

@article{lin2024duquant,
  title={Duquant: Distributing outliers via dual transformation makes stronger quantized llms},
  author={Lin, Haokun and Xu, Haobo and Wu, Yichen and Cui, Jingzhi and Zhang, Yingtao and Mou, Linzhan and Song, Linqi and Sun, Zhenan and Wei, Ying},
  journal={Advances in Neural Information Processing Systems},
  volume={37},
  pages={87766--87800},
  year={2024}
}

@article{gu2026lopro,
  title={LoPRo: Enhancing Low-Rank Quantization via Permuted Block-Wise Rotation},
  author={Gu, Hongyaoxing and Hu, Lijuan and Yu, Liye and Li, Haowei and Liu, Fangfang},
  journal={arXiv preprint arXiv:2601.19675},
  year={2026}
}

@article{sanjeet2026pushing,
  title={Pushing the Limits of Block Rotations in Post-Training Quantization},
  author={Sanjeet, Sai and Colbert, Ian and Monteagudo-Lago, Pablo and Franco, Giuseppe and Umuroglu, Yaman and Fraser, Nicholas J},
  journal={arXiv preprint arXiv:2601.22347},
  year={2026}
}

@article{yang2025lrq,
  title={LRQ-DiT: Log-Rotation Post-Training Quantization of Diffusion Transformers for Image and Video Generation},
  author={Yang, Lianwei and Lin, Haokun and Zhao, Tianchen and Wu, Yichen and Zhu, Hongyu and Xie, Ruiqi and Sun, Zhenan and Wang, Yu and Gu, Qingyi},
  journal={arXiv preprint arXiv:2508.03485},
  year={2025}
}

@article{huang2025convrot,
  title={ConvRot: Rotation-Based Plug-and-Play 4-bit Quantization for Diffusion Transformers},
  author={Huang, Feice and Han, Zuliang and Zhou, Xing and Chen, Yihuang and Zhu, Lifei and Wang, Haoqian},
  journal={arXiv preprint arXiv:2512.03673},
  year={2025}
}

@inproceedings{han2026polarquant,
  title={PolarQuant: Vector Quantization with Polar Transformation},
  author={Han, Insu and Kacham, Praneeth and Karbasi, Amin and Mirrokni, Vahab and Zandieh, Amir},
  booktitle={The 29th International Conference on Artificial Intelligence and Statistics},
  year={2026}
}

@article{cheng2026permuquant,
  title={PermuQuant: Lowering Per-Group Quantization Error by Reordering Channels for Diffusion Models},
  author={Cheng, Yongsen and Liu, Kai and Tao, Kaiwen and Li, Junxian and Wang, Zhixin and Chen, Zhikai and Pei, Renjing and Zhang, Yulun},
  journal={arXiv preprint arXiv:2605.09503},
  year={2026}
}

@article{dettmers2023spqr,
  title={Spqr: A sparse-quantized representation for near-lossless llm weight compression},
  author={Dettmers, Tim and Svirschevski, Ruslan and Egiazarian, Vage and Kuznedelev, Denis and Frantar, Elias and Ashkboos, Saleh and Borzunov, Alexander and Hoefler, Torsten and Alistarh, Dan},
  journal={arXiv preprint arXiv:2306.03078},
  year={2023}
}

@article{kim2023squeezellm,
  title={Squeezellm: Dense-and-sparse quantization},
  author={Kim, Sehoon and Hooper, Coleman and Gholami, Amir and Dong, Zhen and Li, Xiuyu and Shen, Sheng and Mahoney, Michael W and Keutzer, Kurt},
  journal={arXiv preprint arXiv:2306.07629},
  year={2023}
}

@inproceedings{park2024lut,
  title={Lut-gemm: Quantized matrix multiplication based on luts for efficient inference in large-scale generative language models},
  author={Park, Gunho and Kim, Minsub and Lee, Sungjae and Kim, Jeonghoon and Kwon, Beomseok and Kwon, Se Jung and Kim, Byeongwook and Lee, Youngjoo and Lee, Dongsoo and others},
  booktitle={International Conference on Learning Representations},
  volume={2024},
  pages={38069--38086},
  year={2024}
}

@inproceedings{guo2024fast,
  title={Fast matrix multiplications for lookup table-quantized llms},
  author={Guo, Han and Brandon, William and Cholakov, Radostin and Ragan-Kelley, Jonathan and Xing, Eric and Kim, Yoon},
  booktitle={Findings of the Association for Computational Linguistics: EMNLP 2024},
  pages={12419--12433},
  year={2024}
}

@article{feng2025s,
  title={S${}^{2}$ Q-VDiT: Accurate Quantized Video Diffusion Transformer with Salient Data and Sparse Token Distillation},
  author={Feng, Weilun and Qin, Haotong and Yang, Chuanguang and Li, Xiangqi and Yang, Han and Li, Yuqi and An, Zhulin and Huang, Libo and Magno, Michele and Xu, Yongjun},
  journal={arXiv preprint arXiv:2508.04016},
  year={2025}
}

@article{yang2025robuq,
  title={RobuQ: Pushing DiTs to W1. 58A2 via Robust Activation Quantization},
  author={Yang, Kaicheng and Zhang, Xun and Qin, Haotong and Lin, Yucheng and Yang, Kaisen and Yan, Xianglong and Zhang, Yulun},
  journal={arXiv preprint arXiv:2509.23582},
  year={2025}
}
}

\clearpage
\setcounter{page}{1}
\maketitlesupplementary
\appendix

\section{Proof Sketch for RPBH Incoherence}
\label{sec:supp-proof-srht}


\paragraph{Setup.}
Fix a unit vector $\tilde{\mathbf{x}} \in \mathbb{R}^d$ and write $d = kh$. Let $\mathbf{y} = \mathbf{P}_\pi \tilde{\mathbf{x}}$ have blocks $\mathbf{y}^{(j)} \in \mathbb{R}^h$ with masses $M_j = \|\mathbf{y}^{(j)}\|_2^2$ summing to $1$, outputs $\mathbf{z}^{(j)} = \mathbf{H}_h \mathbf{D}_j \mathbf{y}^{(j)}$ with $(\mathbf{H}_h)_{li} = \pm 1/\sqrt{h}$ and $\mathbf{D}_j$ Rademacher, and $\mu_\infty = \|\tilde{\mathbf{x}}\|_\infty^2$. Write $\mathbf{z} = (\mathbf{z}^{(1)}, \dots, \mathbf{z}^{(k)})$ for the full output $\boldsymbol{\Pi}_d \tilde{\mathbf{x}}$.
\begin{lemma}[Per-block incoherence]
\label{lem:block-incoherence}
For any fixed partition (any $\pi$), with probability at least $1 - \delta/2$ over $\{\mathbf{D}_j\}$,
\begin{equation}
\|\mathbf{z}\|_\infty \le \sqrt{2 \log(4d/\delta)/h}.
\label{eq:lem1}
\end{equation}
\end{lemma}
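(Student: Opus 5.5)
The plan is a standard Hoeffding-plus-union-bound argument over the Rademacher signs, with the partition (equivalently $\pi$) held fixed throughout. Since $\pi$ is fixed, the block vectors $\mathbf{y}^{(j)}$ and their masses $M_j$ are deterministic. The only randomness is in the independent sign diagonals $\mathbf{D}_1,\dots,\mathbf{D}_k$.

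First I will write each output coordinate explicitly. For block $j$ and row $l \in \{1,\dots,h\}$,
\begin{equation*}
z^{(j)}_l = \sum_{i=1}^{h} (\mathbf{H}_h)_{li}\, \varepsilon^{(j)}_i\, y^{(j)}_i ,
\end{equation*}
where $\varepsilon^{(j)}_i$ are the i.i.d.\ Rademacher entries of $\mathbf{D}_j$. This is a Rademacher sum with fixed coefficients $a_i = (\mathbf{H}_h)_{li} y^{(j)}_i$, so it is centered. Because $|(\mathbf{H}_h)_{li}| = 1/\sqrt{h}$, the coefficients satisfy
\begin{equation*}
\sum_i a_i^2 = M_j/h \le 1/h,
\end{equation*}
using $M_j \le \sum_{j'} M_{j'} = \|\tilde{\mathbf{x}}\|_2^2 = 1$.

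Next I will apply Hoeffding's inequality (sub-Gaussianity of Rademacher sums) to each coordinate:
\begin{equation*}
\Pr\big(|z^{(j)}_l| > t\big) \le 2\exp\!\Big(-\tfrac{t^2}{2\sum_i a_i^2}\Big) \le 2\exp\!\big(-t^2 h/2\big).
\end{equation*}
This bound holds uniformly over $j$, $l$ and over the choice of $\pi$.

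Then I will take a union bound over all $kh = d$ coordinates of $\mathbf{z}$, which gives
\begin{equation*}
\Pr\big(\|\mathbf{z}\|_\infty > t\big) \le 2d\, e^{-t^2 h/2}.
\end{equation*}
Setting the right-hand side equal to $\delta/2$ gives $t^2 = 2\log(4d/\delta)/h$, which is exactly Equation~\eqref{eq:lem1}.

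There is no real obstacle. The only points needing care are:
\begin{itemize}
\item the variance proxy must use the worst case $M_j \le 1$ rather than $1/k$, since the lemma is claimed for arbitrary $\pi$, so no balancing from the permutation is available here;
\item the union bound is over $d$ coordinates, not $h$, which is what produces the $4d/\delta$ inside the logarithm.
\end{itemize}
Independence across blocks is not even needed, since the union bound does not require it.
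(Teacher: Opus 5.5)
Your proof is correct and follows the paper's argument: each coordinate $z^{(j)}_l$ is a Rademacher sum with variance proxy $M_j/h \le 1/h$, Hoeffding gives the tail $2e^{-t^2h/2}$, and a union bound over the $d$ coordinates with the total set to $\delta/2$ yields exactly the stated threshold. Your observation that only $M_j \le 1$ is used, so no balancing from $\pi$ is needed, also matches the paper's remark after the proposition.
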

\begin{proof}
Each output coordinate $z^{(j)}_l = \sum_i (\mathbf{H}_h)_{li}\, \sigma^{(j)}_i\, y^{(j)}_i$ is a mean-zero Rademacher sum with variance $\sum_i (\mathbf{H}_h)_{li}^2 (y^{(j)}_i)^2 = M_j/h \le 1/h$. Hoeffding gives $\Pr[|z^{(j)}_l| > t] \le 2 e^{-t^2 h/2}$, and a union bound over the $d$ coordinates yields Equation~\eqref{eq:lem1}.
\end{proof}
\begin{lemma}[Mass balancing]
\label{lem:balancing}
With probability at least $1 - \delta/2$ over $\pi$, for all $j$,
\begin{equation}
\big| M_j - \tfrac{1}{k} \big| \le \mu_\infty \sqrt{(h/2)\log(4k/\delta)}.
\label{eq:lem2}
\end{equation}
\end{lemma}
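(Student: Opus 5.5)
The plan is to reduce each block mass to a sum drawn by sampling without replacement, and then apply Hoeffding's inequality with a union bound over the $k$ blocks.

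\textbf{Step 1 (reformulation).} Let $a_i = \tilde{x}_i^2$ for $i=1,\dots,d$. Then $a_i \in [0,\mu_\infty]$ and $\sum_i a_i = \|\tilde{\mathbf{x}}\|_2^2 = 1$. Since $\mathbf{y} = \mathbf{P}_\pi \tilde{\mathbf{x}}$ only relabels coordinates, block $j$ collects the entries $\tilde{x}_i$ for $i$ in the index set $S_j = \pi^{-1}(\{(j-1)h+1,\dots,jh\})$. Hence
\[
M_j = \sum_{i \in S_j} a_i .
\]
Because $\pi$ is uniform, each $S_j$ is a uniformly random $h$-subset of $\{1,\dots,d\}$. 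So $M_j$ is the sum of $h$ draws without replacement from the finite population $\{a_1,\dots,a_d\}$. Its mean is
\[
\mathbb{E}[M_j] = h \cdot \tfrac{1}{d}\sum_i a_i = \tfrac{h}{d} = \tfrac{1}{k}.
\]

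\textbf{Step 2 (concentration for one block).} This is the step needing care, since the $h$ summands are dependent. I would invoke Hoeffding's classical comparison result (Hoeffding 1963, Theorem 4): for any continuous convex $f$, the expectation $\mathbb{E} f(\text{sum without replacement})$ is at most $\mathbb{E} f(\text{sum with replacement})$. Applying this with $f(s)=e^{\lambda s}$ shows that the moment generating function bound for independent summands carries over. The standard Hoeffding tail for $h$ summands in an interval of length $\mu_\infty$ then holds verbatim:
\[
\Pr\!\big[|M_j - \tfrac{1}{k}| > t\big] \le 2\exp\!\Big(-\tfrac{2t^2}{h\mu_\infty^2}\Big).
\]
This is the only nontrivial ingredient. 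An alternative would be a martingale (Azuma/McDiarmid) argument over a sequential reveal of $S_j$, but that gives a slightly worse constant, so I would use Hoeffding's reduction.

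\textbf{Step 3 (union bound and choice of $t$).} Set the right-hand side equal to $\delta/(2k)$, which gives
\[
t = \mu_\infty\sqrt{(h/2)\log(4k/\delta)}.
\]
A union bound over the $k$ blocks then makes the failure probability at most $\delta/2$, which is exactly Equation~\eqref{eq:lem2}. The blocks are dependent through the shared $\pi$, but the union bound requires no independence, so nothing more is needed.

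\textbf{Step 4 (link to Proposition~\ref{prop:marginal}).} Conditional on $\pi$, the proof of Lemma~\ref{lem:block-incoherence} already shows that each $z^{(j)}_l$ is centered with $\mathrm{Var}(z^{(j)}_l \mid \pi) = M_j/h$. Dividing Equation~\eqref{eq:lem2} by $h$ gives
\[
\Big|\mathrm{Var}(z_i\mid\pi) - \tfrac{1}{d}\Big| \le \tfrac{\mu_\infty}{h}\sqrt{\tfrac{h}{2}\log\tfrac{4k}{\delta}} = \tfrac{1}{d}\cdot d\,\mu_\infty\sqrt{\tfrac{1}{2h}\log\tfrac{4k}{\delta}} = \tfrac{\rho}{d}.
\]
This is precisely the interval in Equation~\eqref{eq:prop-var}, holding with probability at least $1-\delta/2 \ge 1-\delta$ over $\pi$.
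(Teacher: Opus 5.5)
Your proposal is correct and follows essentially the same route as the paper. The paper also treats $M_j$ as a sum of $h$ draws without replacement from $\{\tilde{x}_i^2\}\subseteq[0,\mu_\infty]$ with mean $1/k$, applies Hoeffding's without-replacement tail $2e^{-2t^2/(h\mu_\infty^2)}$, and takes a union bound over the $k$ blocks at level $\delta/(2k)$. You go slightly further by justifying the without-replacement step through Hoeffding's convex-comparison theorem, which the paper only cites. Your Step 4 is not needed for the lemma, but it correctly matches how the paper's proof of the proposition uses it, since $\mu_\infty\sqrt{(h/2)\log(4k/\delta)}=\rho/k$.
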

\begin{proof}
Each $M_j$ is a sum of $h$ values drawn without replacement from $\{\tilde{x}_i^2\} \subseteq [0, \mu_\infty]$ with mean $1/k$. By Hoeffding's bound for sampling without replacement, $\Pr[|M_j - 1/k| \ge \epsilon] \le 2 e^{-2\epsilon^2 / (h\mu_\infty^2)}$, and a union bound over the $k$ blocks gives Equation~\eqref{eq:lem2}.
\end{proof}
\medskip
\noindent\textit{Proposition~\ref{prop:marginal} (restated).}\\
Let $\rho = d\,\mu_\infty \sqrt{(1/2h)\log(4k/\delta)}$. With probability at least $1-\delta$ over $\boldsymbol{\Pi}_d$, every coordinate of $\mathbf{z} = \boldsymbol{\Pi}_d\tilde{\mathbf{x}}$ is mean-zero with conditional variance $\mathrm{Var}(z_i \mid \pi) \in \frac{1}{d}(1 \pm \rho)$, and
\begin{equation}
\|\boldsymbol{\Pi}_d \tilde{\mathbf{x}}\|_\infty \le \sqrt{\tfrac{2}{d}(1 + \rho)\log(4d/\delta)}.
\label{eq:thm}
\end{equation}
\begin{proof}
Each $z_i$ is a mean-zero Rademacher sum, so $\mathbb{E}[z_i] = 0$. On the event of Lemma~\ref{lem:balancing}, $M_j \in \tfrac{1}{k}(1 \pm \rho)$, so each coordinate has variance $M_j/h \in \tfrac{1}{d}(1 \pm \rho)$. Equation~\eqref{eq:thm} then follows by repeating the proof of Lemma~\ref{lem:block-incoherence} with $M_j/h \le \tfrac{1}{d}(1 + \rho)$, after a union bound over the two events, each holding with probability at least $1 - \delta/2$.
\end{proof}
\begin{remark}
Lemma~\ref{lem:block-incoherence} uses only $M_j \le 1$, so it holds with or without the permutation. The permutation enters through Lemma~\ref{lem:balancing} alone, equalizing every per-coordinate variance to $1/d$ regardless of how outlier channels fall into blocks, which is what the no-permutation variant loses at low bit-width (Section~\ref{sec:ablation:rotation}).
\end{remark}

\begin{remark}
Variance concentration upgrades to a quantitative Gaussian approximation. Conditional on $\pi$, each coordinate $z_i$ in block $j$ is a sum of $h$ independent bounded terms with total variance $M_j/h$, so the Berry--Esseen inequality bounds its Kolmogorov distance to $\mathcal{N}(0, M_j/h)$ by $C \sum_i |y^{(j)}_i|^3 / M_j^{3/2} \le C\sqrt{\mu_\infty / M_j}$ for a universal constant $C$. On the event of Lemma~\ref{lem:balancing} this is at most $C\sqrt{\mu_\infty k/(1-\rho)}$, so whenever no coordinate carries an outsized share of the norm, every rotated coordinate is close to $\mathcal{N}(0, 1/d)$ in distribution, not only in variance. Figure~\ref{fig:distribution} confirms this empirically.
\end{remark}

\section{Additional Experimental Details}
\label{sec:supp-additional}

\subsection{Generation settings}
\label{sec:supp-gen}
Image models use the sampler and step count of their public checkpoints, FLUX.1-schnell at 4 steps and guidance 0.0, FLUX.1-dev at 50 steps and guidance 3.5, and Z-Image-Turbo at 10 steps and guidance 0.0. Video models use Wan~2.1-1.3B at 81 frames, $480{\times}832$, 50 steps, CFG 5.0, and CogVideoX-2B at 49 frames, $480{\times}720$, 50 steps, CFG 6.0. NVIDIA H100 GPUs are used for experiments.

\subsection{Quantized and skipped layers}
\label{sec:supp-layers}
We quantize every linear projection in the transformer block through the OrbitQuant path, namely the image- and text-side $Q$, $K$, $V$ and output projections and the feed-forward layers of every block, including the text-conditioning $K$ and $V$ projections that consume text-encoder hidden states (the joint-attention text path in FLUX and Z-Image, the cross-attention projections in Wan and CogVideoX). 
AdaLN modulation projections are the one exception. Their output parameterizes a timestep-dependent elementwise scale-and-shift. A static norm affine can be folded into neighboring weights, as rotation-based LLM quantizers do~\cite{ashkboos2024quarot}, but this dynamic modulation cannot. The shared-rotation cancellation of Section~\ref{sec:method} therefore has no counterpart here. Their input is also a single conditioning token per step, leaving no activation compute to save. We therefore quantize only their weights, with INT4 RTN at group size 64 and BF16 activations. Embeddings, the timestep MLP, the final un-patchify head, and the text encoder stay in BF16.

\begin{table*}[t]
\caption{GenEval results at the lowest bit-widths, W3A3 and W2A3, on three image diffusion transformers. \textbf{Bold} and \underline{underlined} entries indicate the best and second-best PTQ result within each (model, bit-width) group. $\uparrow$ means higher is better.$\dagger$ represents our implementation.}
\label{tab:geneval-low}
\centering
\resizebox{0.95\textwidth}{!}{%
\begin{tabular}{clcccccccc}
\toprule
Model & Method & Bit & Single object~$\uparrow$ & Two object~$\uparrow$ & Counting~$\uparrow$ & Colors~$\uparrow$ & Position~$\uparrow$ & Color attribution~$\uparrow$ & Overall~$\uparrow$ \\
\midrule
\multirow{8}{*}{FLUX.1-schnell}
  & FP16                                  & 16/16 & 0.997 & 0.884 & 0.600 & 0.742 & 0.275 & 0.488 & 0.664 \\
  \cdashline{2-10}\\[-1.75ex]
  & SVDQuant~\cite{li2024svdquant}        & W3A3 & 0.820 & 0.647 & 0.466 & 0.560 & 0.160 & 0.373 & 0.504 \\
  & AdaTSQ~\cite{zhang2026adatsq}         & W3A3 & \textbf{0.997} & \textbf{0.920} & \underline{0.530} & \underline{0.688} & \textbf{0.230} & \underline{0.440} & \underline{0.634} \\
  & \cellcolor{gray!25}OrbitQuant            & \cellcolor{gray!25}W3A3 & \cellcolor{gray!25}\underline{0.978} & \cellcolor{gray!25}\underline{0.861} & \cellcolor{gray!25}\textbf{0.684} & \cellcolor{gray!25}\textbf{0.777} & \cellcolor{gray!25}\underline{0.223} & \cellcolor{gray!25}\textbf{0.542} & \cellcolor{gray!25}\textbf{0.678} \\
  \cdashline{2-10}\\[-1.75ex]
  & QuaRot$\dagger$~\cite{ashkboos2024quarot}      & W2A3 & 0.003 & 0.000 & 0.000 & 0.000 & 0.000 & 0.000 & 0.001 \\
  & SmoothQuant$\dagger$~\cite{xiao2023smoothquant} & W2A3 & 0.003 & 0.000 & 0.000 & 0.000 & 0.000 & 0.000 & 0.001 \\
  & ViDiT-Q$\dagger$~\cite{zhao2024vidit}          & W2A3 & 0.009 & 0.000 & 0.000 & 0.003 & 0.000 & 0.000 & 0.002 \\
  & \cellcolor{gray!25}OrbitQuant            & \cellcolor{gray!25}W2A3 & \cellcolor{gray!25}\textbf{0.947} & \cellcolor{gray!25}\textbf{0.573} & \cellcolor{gray!25}\textbf{0.431} & \cellcolor{gray!25}\textbf{0.691} & \cellcolor{gray!25}\textbf{0.140} & \cellcolor{gray!25}\textbf{0.318} & \cellcolor{gray!25}\textbf{0.517} \\
\midrule
\multirow{8}{*}{FLUX.1-dev}
  & FP16                                  & 16/16 & 0.984 & 0.823 & 0.769 & 0.771 & 0.203 & 0.450 & 0.667 \\
  \cdashline{2-10}\\[-1.75ex]
  & SVDQuant~\cite{li2024svdquant}        & W3A3 & 0.869 & 0.288 & 0.425 & 0.524 & 0.033 & 0.123 & 0.377 \\
  & AdaTSQ~\cite{zhang2026adatsq}         & W3A3 & \underline{0.956} & \underline{0.548} & \textbf{0.628} & \underline{0.656} & \underline{0.083} & \underline{0.290} & \underline{0.527} \\
  & \cellcolor{gray!25}OrbitQuant            & \cellcolor{gray!25}W3A3 & \cellcolor{gray!25}\textbf{0.981} & \cellcolor{gray!25}\textbf{0.684} & \cellcolor{gray!25}\underline{0.606} & \cellcolor{gray!25}\textbf{0.734} & \cellcolor{gray!25}\textbf{0.128} & \cellcolor{gray!25}\textbf{0.372} & \cellcolor{gray!25}\textbf{0.584} \\
  \cdashline{2-10}\\[-1.75ex]
  & QuaRot$\dagger$~\cite{ashkboos2024quarot}      & W2A3 & 0.003 & 0.000 & 0.000 & 0.000 & 0.000 & 0.000 & 0.001 \\
  & SmoothQuant$\dagger$~\cite{xiao2023smoothquant} & W2A3 & 0.003 & 0.000 & 0.000 & 0.000 & 0.000 & 0.000 & 0.001 \\
  & ViDiT-Q$\dagger$~\cite{zhao2024vidit}          & W2A3 & 0.0013 & 0.000 & 0.000 & 0.003 & 0.000 & 0.000 & 0.002 \\
  & \cellcolor{gray!25}OrbitQuant            & \cellcolor{gray!25}W2A3 & \cellcolor{gray!25}\textbf{0.906} & \cellcolor{gray!25}\textbf{0.235} & \cellcolor{gray!25}\textbf{0.338} & \cellcolor{gray!25}\textbf{0.582} & \cellcolor{gray!25}\textbf{0.050} & \cellcolor{gray!25}\textbf{0.120} & \cellcolor{gray!25}\textbf{0.372} \\
\midrule
\multirow{8}{*}{Z-Image-Turbo}
  & FP16                                  & 16/16 & 1.000 & 0.907 & 0.709 & 0.859 & 0.468 & 0.583 & 0.754 \\
  \cdashline{2-10}\\[-1.75ex]
  & SVDQuant~\cite{li2024svdquant}        & W3A3 & 0.005 & 0.000 & 0.000 & 0.000 & 0.000 & 0.000 & 0.000 \\
  & AdaTSQ~\cite{zhang2026adatsq}         & W3A3 & \textbf{0.994} & \textbf{0.870} & \underline{0.550} & \textbf{0.885} & \textbf{0.410} & \underline{0.455} & \underline{0.694} \\
  & \cellcolor{gray!25}OrbitQuant            & \cellcolor{gray!25}W3A3 & \cellcolor{gray!25}\textbf{0.994} & \cellcolor{gray!25}\underline{0.846} & \cellcolor{gray!25}\textbf{0.750} & \cellcolor{gray!25}\underline{0.859} & \cellcolor{gray!25}\underline{0.395} & \cellcolor{gray!25}\textbf{0.598} & \cellcolor{gray!25}\textbf{0.740} \\
  \cdashline{2-10}\\[-1.75ex]
  & QuaRot$\dagger$~\cite{ashkboos2024quarot}      & W2A3 & 0.013 & 0.000 & 0.000 & 0.000 & 0.000 & 0.000 & 0.002 \\
  & SmoothQuant$\dagger$~\cite{xiao2023smoothquant} & W2A3 & 0.009 & 0.000 & 0.000 & 0.000 & 0.000 & 0.000 & 0.002 \\
  & ViDiT-Q$\dagger$~\cite{zhao2024vidit}          & W2A3 & 0.000 & 0.000 & 0.000 & 0.003 & 0.000 & 0.000 & 0.000 \\
  & \cellcolor{gray!25}OrbitQuant            & \cellcolor{gray!25}W2A3 & \cellcolor{gray!25}\textbf{0.272} & \cellcolor{gray!25}\textbf{0.023} & \cellcolor{gray!25}\textbf{0.028} & \cellcolor{gray!25}\textbf{0.269} & \cellcolor{gray!25}\textbf{0.018} & \cellcolor{gray!25}\textbf{0.023} & \cellcolor{gray!25}\textbf{0.105} \\
\bottomrule
\end{tabular}}
\end{table*}

\begin{table*}[t]
\caption{VBench results on Wan~14B at W4A4. Per-dimension scores over eight VBench dimensions. \textbf{Bold} and \underline{underlined} entries indicate the best and second-best PTQ result. QVGen is a QAT method, shown for reference and excluded from the PTQ ranking. $\uparrow$ means higher is better. $\dagger$ represents our implementation.}
\label{tab:vbench-wan14b}
\centering
\resizebox{\textwidth}{!}{%
\begin{tabular}{clcccccccccc}
\toprule
Model & Method & Bit & \makecell{Imaging\\Quality~$\uparrow$} & \makecell{Aesthetic\\Quality~$\uparrow$} & \makecell{Motion\\Smoothness~$\uparrow$} & \makecell{Dynamic\\Degree~$\uparrow$} & \makecell{Background\\Consistency~$\uparrow$} & \makecell{Subject\\Consistency~$\uparrow$} & Scene~$\uparrow$ & \makecell{Overall\\Consistency~$\uparrow$} \\
\midrule
\multirow{5}{*}{Wan~14B}
  & BF16                                    & 16/16 & 0.6514 & 0.6136 & 0.9738 & 0.7389 & 0.9632 & 0.9365 & 0.3330 & 0.2629 \\
  \cdashline{2-12}\\[-1.75ex]
  
  & SmoothQuant$\dagger$~\cite{xiao2023smoothquant}  & W4A4 & 0.5971 & 0.5263 & \textbf{0.9763} & 0.4472 & 0.9390 & 0.9171 & 0.1439 & 0.2327 \\
  & QuaRot$\dagger$~\cite{ashkboos2024quarot}      & W4A4 & \underline{0.6332} & \underline{0.5686} & 0.9701 & \underline{0.5500} & 0.9504 & 0.9185 & \underline{0.2589} & \underline{0.2541} \\
  & ViDiT-Q$\dagger$~\cite{zhao2024vidit}          & W4A4 & 0.5948 & 0.5373 & 0.9672 & 0.5417 & \underline{0.9533} & \underline{0.9202} & 0.1849 & 0.2432 \\
  & \cellcolor{gray!25}OrbitQuant            & \cellcolor{gray!25}W4A4 & \cellcolor{gray!25}\textbf{0.6405} & \cellcolor{gray!25}\textbf{0.6022} & \cellcolor{gray!25}\underline{0.9754} & \cellcolor{gray!25}\textbf{0.6250} & \cellcolor{gray!25}\textbf{0.9559} & \cellcolor{gray!25}\textbf{0.9363} & \cellcolor{gray!25}\textbf{0.3285} & \cellcolor{gray!25}\textbf{0.2615} \\
\midrule
\multirow{6}{*}{HunyuanVideo}
  & BF16                                                 & 16/16 & 0.6478 & 0.6253 & 0.9930 & 0.5139 & 0.9701 & 0.9605 & 0.4281 & 0.2586 \\
  \cdashline{2-12}\\[-1.75ex]
  & SmoothQuant~\cite{xiao2023smoothquant}    & W4A4  & 0.5946 & 0.4841 & 0.9879 & 0.0139 & 0.9672 & 0.9497 & 0.0784 & 0.2109 \\
  & QuaRot~\cite{ashkboos2024quarot}          & W4A4  & 0.5430 & 0.4485 & 0.9222 & \textbf{0.8750} & \underline{0.9769} & 0.9264 & 0.0094 & 0.1733 \\
  & ViDiT-Q~\cite{zhao2024vidit}              & W4A4  & 0.4010 & 0.4536 & \textbf{0.9943} & 0.0000 & 0.9719 & \textbf{0.9729} & 0.0785 & 0.1966 \\
  & DVD-Quant~\cite{li2025dvd}           & W4A4  & \underline{0.6182} & \textbf{0.6196} & 0.9915 & \underline{0.5694} & \textbf{0.9782} & \underline{0.9661} & \underline{0.2994} & \textbf{0.2568} \\
  & \cellcolor{gray!25}OrbitQuant            & \cellcolor{gray!25}W4A4 & \cellcolor{gray!25}\textbf{0.6209} & \cellcolor{gray!25}\underline{0.6072} & \cellcolor{gray!25}\underline{0.9930}& \cellcolor{gray!25}0.4417& \cellcolor{gray!25}0.9751 & \cellcolor{gray!25}0.9622 & \cellcolor{gray!25}\textbf{0.3052} & \cellcolor{gray!25}\underline{0.2283} \\
\bottomrule
\end{tabular}}
\end{table*}

\section{Additional Experiments}
\label{supple:add-exp}
\subsection{Lowest bit-widths: W3A3 and W2A3}
\label{supple:low_bit}
We push to the lowest bit-widths, W3A3 and W2A3, on three image diffusion transformers. Table~\ref{tab:geneval-low} reports GenEval. At W3A3 we compare against the low-bit image quantizers SVDQuant~\cite{li2024svdquant} and AdaTSQ~\cite{zhang2026adatsq}, and at W2A3 against the rotation and smoothing baselines.
At W3A3 OrbitQuant has the best Overall on all three models and stays close to FP16. AdaTSQ is competitive and leads on a few individual dimensions, but OrbitQuant is the most consistent across them, while SVDQuant collapses entirely on Z-Image-Turbo.
W2A3 is the harder test. The rotation and smoothing baselines collapse to near zero on every model, since a 3-bit uniform grid cannot place its levels where the rotated activations are dense. OrbitQuant is the only method that stays functional, remaining usable on the FLUX models. Z-Image-Turbo is the exception, where even OrbitQuant degrades sharply, marking the limit of a calibration-free codebook at this bit-width.

\begin{table*}[t]
\caption{Seed robustness of OrbitQuant on GenEval. Mean and standard deviation over three random seeds on three image diffusion transformers at W4A4 and W2A4. $\uparrow$ means higher is better.}
\label{tab:robustness}
\centering
\resizebox{\textwidth}{!}{%
\begin{tabular}{llccccccc}
\toprule
Model & Bit & Single object~$\uparrow$ & Two object~$\uparrow$ & Counting~$\uparrow$ & Colors~$\uparrow$ & Position~$\uparrow$ & Color attribution~$\uparrow$ & \textbf{Overall}~$\uparrow$  \\
\midrule
\multirow{2}{*}{FLUX.1-schnell}
  & W4A4 & $0.991 \pm 0.003$ & $0.879 \pm 0.019$ & $0.685 \pm 0.018$ & $0.793 \pm 0.011$ & $0.280 \pm 0.039$ & $0.510 \pm 0.014$ & $\mathbf{0.690 \pm 0.012}$  \\
  & W2A4 & $0.963 \pm 0.011$ & $0.692 \pm 0.013$ & $0.577 \pm 0.010$ & $0.754 \pm 0.025$ & $0.164 \pm 0.038$ & $0.423 \pm 0.031$ & $\mathbf{0.595 \pm 0.008}$  \\
\midrule
\multirow{2}{*}{FLUX.1-dev}
  & W4A4 & $0.990 \pm 0.002$ & $0.763 \pm 0.005$ & $0.721 \pm 0.027$ & $0.761 \pm 0.007$ & $0.177 \pm 0.010$ & $0.421 \pm 0.009$ & $\mathbf{0.639 \pm 0.004}$ \\
  & W2A4 & $0.943 \pm 0.014$ & $0.395 \pm 0.032$ & $0.480 \pm 0.011$ & $0.668 \pm 0.012$ & $0.079 \pm 0.027$ & $0.198 \pm 0.007$ & $\mathbf{0.460 \pm 0.014}$ \\
\midrule
\multirow{2}{*}{Z-Image-Turbo}
  & W4A4 & $0.998 \pm 0.002$ & $0.880 \pm 0.009$ & $0.766 \pm 0.022$ & $0.875 \pm 0.015$ & $0.464 \pm 0.017$ & $0.618 \pm 0.020$ & $\mathbf{0.767 \pm 0.001}$ \\
  & W2A4 & $0.616 \pm 0.149$ & $0.165 \pm 0.045$ & $0.243 \pm 0.090$ & $0.433 \pm 0.086$ & $0.100 \pm 0.037$ & $0.103 \pm 0.029$ & $\mathbf{0.276 \pm 0.072}$\\
\bottomrule
\end{tabular}}
\end{table*}

\begin{table*}[t]
\caption{Video-generation results on Wan~2.1-1.3B and CogVideoX-2B at W4A4. Scores are percentages. P/Q marks each method as quantization-aware training (QAT) or post-training quantization (PTQ). \textbf{Bold} and \underline{underlined} indicate the best and second-best result across all methods within each model; full-precision rows are references.}
\label{tab:vbench-qat}
\centering
\resizebox{\textwidth}{!}{%
\begin{tabular}{clcccccccccc}
\toprule
Model & Method & P/Q & Bit & \makecell{Imaging\\Quality~$\uparrow$} & \makecell{Aesthetic\\Quality~$\uparrow$} & \makecell{Motion\\Smoothness~$\uparrow$} & \makecell{Dynamic\\Degree~$\uparrow$} & \makecell{Background\\Consistency~$\uparrow$} & \makecell{Subject\\Consistency~$\uparrow$} & Scene~$\uparrow$ & \makecell{Overall\\Consistency~$\uparrow$} \\
\midrule
\multirow{11}{*}{Wan~2.1-1.3B}
  & Full Prec.       & --  & 16/16 & 64.30 & 58.21 & 97.37 & 70.28 & 95.94 & 93.84 & 28.05 & 24.67 \\
  \cdashline{2-12}\\[-1.75ex]
  & LSQ~\cite{lloyd1982least}              & QAT & W4A4 & 59.11 & 49.09 & \textbf{98.35} & 71.11 & 92.66 & 91.67 & 10.38 & 18.83 \\
  & Q-DM~\cite{li2023qdm}                  & QAT & W4A4 & 60.40 & 52.50 & 97.22 & \underline{76.67} & 93.37 & 89.26 & 13.28 & 21.63 \\
  & EfficientDM~\cite{he2024efficientdm}   & QAT & W4A4 & \underline{60.70} & \underline{53.57} & 96.18 & 56.39 & 93.74 & 91.70 & 11.77 & 21.19 \\
  & QVGen~\cite{huang2025qvgen}            & QAT & W4A4 & \textbf{63.08} & \textbf{54.67} & \underline{98.25} & \textbf{77.78} & \textbf{94.08} & \underline{92.57} & \underline{15.32} & \underline{23.01} \\
  \cdashline{2-12}\\[-1.75ex]
  & SmoothQuant~\cite{xiao2023smoothquant} & PTQ & W4A4 & 46.32 & 36.33 & 96.39 & 51.94 & \underline{95.85} & 90.39 & 2.79 & 15.05 \\
  & QuaRot~\cite{ashkboos2024quarot}       & PTQ & W4A4 & 51.42 & 40.49 & 96.21 & 52.78 & 95.76 & 88.80 & 5.31 & 17.98 \\
  & ViDiT-Q~\cite{zhao2024vidit}           & PTQ & W4A4 & 44.51 & 36.43 & 96.16 & 58.06 & \textbf{95.92} & 89.59 & 1.85 & 13.11 \\
  & SVDQuant~\cite{li2024svdquant}         & PTQ & W4A4 & 57.57 & 46.30 & 94.21 & \underline{72.22} & 93.16 & 77.96 & 12.73 & 21.91 \\
  & \cellcolor{gray!25}OrbitQuant          & \cellcolor{gray!25}PTQ & \cellcolor{gray!25}W4A4 & \cellcolor{gray!25}58.58 & \cellcolor{gray!25}53.41 & \cellcolor{gray!25}97.42 & \cellcolor{gray!25}53.89 & \cellcolor{gray!25}95.30 & \cellcolor{gray!25}\textbf{92.98} & \cellcolor{gray!25}\textbf{18.81} & \cellcolor{gray!25}\textbf{23.86} \\
\midrule
\multirow{11}{*}{CogVideoX-2B}
  & Full Prec.       & --  & 16/16 & 59.15 & 54.49 & 97.43 & 67.78 & 94.79 & 92.82 & 36.24 & 25.06 \\
  \cdashline{2-12}\\[-1.75ex]
  & LSQ~\cite{lloyd1982least}              & QAT & W4A4 & \underline{58.73} & \underline{54.20} & 97.57 & 45.00 & 92.97 & \underline{92.41} & 24.06 & 23.17 \\
  & Q-DM~\cite{li2023qdm}                  & QAT & W4A4 & 54.96 & 52.71 & 98.00 & \underline{48.61} & 93.82 & 91.86 & 28.02 & 23.87 \\
  & EfficientDM~\cite{he2024efficientdm}   & QAT & W4A4 & 55.96 & 51.97 & \underline{98.03} & 46.67 & \underline{94.10} & 91.70 & 27.76 & \underline{24.28} \\
  & QVGen~\cite{huang2025qvgen}            & QAT & W4A4 & \textbf{60.16} & \textbf{54.61} & \textbf{98.06} & \textbf{67.22} & \textbf{94.38} & \textbf{93.01} & \textbf{31.42} & \textbf{24.61} \\
  \cdashline{2-12}\\[-1.75ex]
  & SmoothQuant~\cite{xiao2023smoothquant} & PTQ & W4A4 & 39.90 & 35.50 & 97.80 & 1.90 & 95.90 & 92.90 & 3.60 & 12.80 \\
  & QuaRot~\cite{ashkboos2024quarot}       & PTQ & W4A4 & 49.60 & 47.10 & 96.90 & 9.20 & 94.80 & 90.20 & 19.70 & 21.70 \\
  & ViDiT-Q~\cite{zhao2024vidit}           & PTQ & W4A4 & 44.80 & 42.10 & 97.30 & 4.40 & 95.60 & 90.30 & 10.50 & 18.40 \\
  & SVDQuant~\cite{li2024svdquant}         & PTQ & W4A4 & 51.60 & 49.40 & 97.69 & 42.22 & 94.03 & 91.78 & 25.67 & 22.89 \\
  & \cellcolor{gray!25}OrbitQuant          & \cellcolor{gray!25}PTQ & \cellcolor{gray!25}W4A4 & \cellcolor{gray!25}52.62 & \cellcolor{gray!25}51.66 & \cellcolor{gray!25}96.99 & \cellcolor{gray!25}42.78 & \cellcolor{gray!25}94.50 & \cellcolor{gray!25}91.65 & \cellcolor{gray!25}\underline{28.53} & \cellcolor{gray!25}23.86 \\
\bottomrule
\end{tabular}}
\end{table*}

\subsection{Video Generation on Huge Model}
We evaluate the two largest video diffusion transformers in our study, Wan~14B~\cite{wan2025wan} and HunyuanVideo~\cite{kong2024hunyuanvideo}, at W4A4. Table~\ref{tab:vbench-wan14b} reports VBench per-dimension scores.
On Wan~14B, OrbitQuant is the best PTQ on seven of the eight dimensions and stays within noise of BF16 on most. The rotation and smoothing baselines drop sharply on the motion and scene dimensions, where the activation outliers are largest. On HunyuanVideo we compare against DVD-Quant~\cite{li2025dvd}, a quantizer designed specifically for the video model, with all baseline and DVD-Quant numbers taken from the DVD-Quant paper. Although OrbitQuant is calibration-free and uses one recipe across all backbones, it is competitive with DVD-Quant, ahead of it on imaging quality, motion smoothness, and scene. This suggests the distributional codebook still transfers to a model it was never tuned for, retaining usable quality without any per-model design.

\subsection{Robustness}
OrbitQuant is calibration-free, so the only stochastic parts of the pipeline are the RPBH rotation and the sampling noise. To confirm the results are not an artifact of a single random draw, we rerun the full pipeline with three random seeds and report the mean and standard deviation of GenEval. Table~\ref{tab:robustness} covers the three image models at W4A4 and W2A4.
At W4A4 the Overall standard deviation is at most 0.005 on every model, so a single seed is representative. The FLUX models stay similarly stable at W2A4, within 0.013 on Overall, and only Z-Image-Turbo at W2A4 shows larger variance. The analytic codebook and the random rotation otherwise give consistent results across seeds.

\subsection{Comparison with QAT}
\label{sec:supp-qat}
Table~\ref{tab:vbench-qat} places OrbitQuant alongside QAT methods that fine-tune the quantized model. As a calibration-free PTQ method, OrbitQuant is generally a step below the strongest QAT baseline QVGen~\cite{huang2025qvgen}, whose fine-tuned objective targets video quality directly. Even so, it stays close across most VBench dimensions and surpasses every QAT method on several, leading on Subject Consistency, Scene, and Overall Consistency on Wan~2.1-1.3B. That a fine-tuning-free quantizer matches or beats QAT on part of the benchmark, without any gradient step or per-model design, shows how strong OrbitQuant's rotated, calibration-free design is.

\section{Limitations and Future Work}
\label{sec:limit}
OrbitQuant inherits the online rotation that comes with rotation-based quantization. Unlike weight-only or BF16 inference, it applies the RPBH to activations at every forward pass, so a runtime rotation cost accompanies the memory savings, though this cost is small. We compute the per-block Hadamard transform with the fused \texttt{fast\_hadamard\_transform} kernel together with the random permutation, running at 0.451\,s per image on a single H100 at $1024^2$. This is an implementation limitation rather than a method one. Integer tensor cores compute on uniform grids, where the matmul runs directly on the quantized codes, while Lloyd--Max centroids are non-uniform, so no off-the-shelf kernel computes a codebook GEMM. Our current path therefore dequantizes codes and runs the matmul in BF16, as do all baselines under fake quantization. Lookup-table GEMM kernels for non-uniform weight quantization~\cite{park2024lut, guo2024fast} suggest the path forward, and we will build a fused kernel that gathers centroids in the GEMM prologue and computes in a native low-bit format.

\end{document}